\documentclass[pdflatex,sn-mathphys-num]{sn-jnl}

\usepackage{hyperref}
\usepackage{graphicx}%
\usepackage{multirow}%
\usepackage{amsthm}%
\usepackage{amsmath,amssymb,amsfonts}%
\usepackage[title]{appendix}%
\usepackage{xcolor}%
\usepackage{textcomp}%
\usepackage{manyfoot}%
\usepackage{booktabs}%
\usepackage{algorithm}%
\usepackage{algorithmicx}%
\usepackage{algpseudocode}%
\usepackage{listings}%
\graphicspath{{./}}

\theoremstyle{thmstyleone}%
\newtheorem{theorem}{Theorem}
\newtheorem{lemma}{Lemma}

\theoremstyle{thmstyletwo}%

\theoremstyle{thmstylethree}%
\newtheorem{definition}{Definition}

\begin{document}

\title[Article Title]{ExO-PPO: an Extended Off-policy Proximal Policy Optimization Algorithm}


\author[1]{\fnm{Hanyong} \sur{Wang}}\email{harryw@stu.scu.edu.cn}

\author*[1]{\fnm{Menglong} \sur{Yang}}\email{mlyang@scu.edu.cn}

\affil*[1]{\orgdiv{School of Aeronautics and Astronautics}, \orgname{Organization}, \orgaddress{\street{No.24 South Section 1, Yihuan Road}, \city{Chengdu}, \postcode{610207}, \state{Sichuan}, \country{China}}}

\abstract{Deep reinforcement learning has been able to solve various tasks successfully, however, due to the construction of policy gradient and training dynamics, tuning deep reinforcement learning models remains challenging. As one of the most successful deep reinforcement-learning algorithm, the Proximal Policy Optimization algorithm (PPO) clips the policy gradient within a conservative on-policy updates, which ensures reliable and stable policy improvement. However, this training pattern may sacrifice sample efficiency. On the other hand, off-policy methods make more adequate use of data through sample reuse, though at the cost of increased the estimation variance and bias. To leverage the advantages of both, in this paper, we propose a new PPO variant based on the stability guarantee from conservative on-policy iteration with a more efficient off-policy data utilization. Specifically, we first derive an extended off-policy improvement from an expectation form of generalized policy improvement lower bound. Then, we extend the clipping mechanism with segmented exponential functions for a suitable surrogate objective function. Third, the trajectories generated by the past $M$ policies are organized in the replay buffer for off-policy training. We refer to this method as Extended Off-policy Proximal Policy Optimization (ExO-PPO). Compared with PPO and some other state-of-the-art variants, we demonstrate an improved performance of ExO-PPO with balanced sample efficiency and stability on varied tasks in the empirical experiments.}

\keywords{Proximal Policy Optimization, Off-policy Reinforcement Learning, Sample Efficiency}



\maketitle

\section{Introduction}\label{sec:introduction}
Deep reinforcement learning (DRL) approaches have achieved various planning and scheduling tasks, especially in those environments with complex and unstructured high dimensional stochastic visual observations. However, as the deep perception models become larger and larger, vision-language model based DRL approaches are further computationally expensive and require large amounts of training data and special tuning \cite{VRL}. As one of the most commonly used RL algorithms, Proximal Policy Optimization (PPO) \cite{PPO} usually needs millions of samples to fit one schedule. There are trade-offs between stability and sample efficiency in reinforcement learning, while on-policy algorithms prioritize stability by restricting policy consistency within updates. The PPO algorithm ensures reliable stability by clipping the policy gradient, which is derived from the Trust Region Policy Optimization \cite{TRPO} and Constrained Policy Optimization \cite{CPO}. But it limits the ability to utilize all available experiences efficiently.

Distinct from on-policy algorithms, off-policy algorithms can reduce the requirement for numerous samples by reusing data stored from previous policies in the replay buffer to calculate multiple policy updates. But the shift in distributions may cause an overestimation bias, which could easily lead to divergence from the current policy. Popular off-policy algorithms often adopt specific implementation tricks, such as target policy smoothing in Dueling DQN \cite{DDQN}, TD3 \cite{TD3}, etc., or extensive hyperparameter tuning, like SAC \cite{SAC}, to limit the bias and variance from off-policy data. In recent years, there have been several researches focused on mixing off-policy methods with the PPO algorithm. GePPO \cite{GePPO} generalized the on-policy policy improvement lower bound to depend on expectations regarding any reference policy, which enables the PPO algorithm to reuse samples from previous policies. Also, off-policy PPO \cite{Off-PPO} clarified that the use of off-policy data does not harm the stability of the clipping surrogate objective in PPO.

On the other hand, the above typical deep reinforcement learning algorithms are considered as a paradigm for online learning, where the interaction between the training agent and its environment is fundamental to evaluate how the agent learns. However, persistent interactions could also be expensive, time-consuming, or otherwise troublesome, particularly in real-world applications. This has motivated growing research efforts in offline DRL, where an agent learns from a pre-collected dataset \cite{Offrl}. Offline DRL enables to utilize previously logged data or leverage expert demonstrations, such as a human operator. However, this also makes offline RL a challenging task, as the original action distributions is not presented in the dataset. Agents tend to inaccurately estimate the value of actions not contained in the dataset \cite{MinOff,BPPO}. Regularizing the training policy with the behavior policy is one of the common approaches to addressing this problem \cite{MinOff,BPPO,staoffQ,ConOffQ,conserQ}. Another emerging research trend involves utilizing diffusion models to solve offline learning tasks. Diffuser \cite{diffuser} trains a diffusion probabilistic model that plans by iteratively denoising trajectories, Diffusion policies \cite{diffusionPolicy,EffdiffPolicy} act as the policy model, replacing conventional Gaussian policies and so on \cite{diffResurvey}. Enhancing training efficiency while maintaining policies consistency is still meaningful in this case.  

In this study, we introduce \textit{Extended Off-policy Proximal Policy Optimization (ExO-PPO)} algorithm, a novel variant of PPO that incorporates an extended off-policy sampling pattern and a specialized surrogate function. By merging the strengths of both on-policy and off-policy methods, we seek to achieve a balance between stability and sample efficiency. Additionally, a smoother surrogate objective function enables more effective gradient-based training. This surrogate function also effectively constrains the policy under offline setting. ExO-PPO maintains acceptable compatibility with the standard PPO algorithm, allowing few modifications for integration into existing workflows. Our empirical results demonstrate that ExO-PPO outperforms the PPO and some other state-of-the-art variants in various Atari games and MuJoCo tasks. Explicitly, our main contributions are as follows.

\begin{itemize}
  \item [1)] 
  We propose an extended off-policy sampling pattern by extending the on-policy pattern to prior $M$ policies from the replay buffer. Then, we derive the generalized policy improvement lower bound into an expectation form, the Extended Off-Policy Improvement Lower Bound, as the theoretical support of the extended off-policy training pattern. 

  \item [2)] 
  Due to the off-policy sampling, whose probability ratio deviates from the initial point of optimization, we re-expanding the clipped ratio with a parameterized exponential function to cover a wider gradient space. This Extended Ratio Objective can also be suitable for offline training.  

  \item [3)] 
  Empirical results demonstrate that our proposed training pattern with the specifically optimized surrogate function outperforms on various tasks, including online and offline tasks, balancing the goals of stability and sample efficiency.
\end{itemize}

\subsection{Related works}

\subsubsection{On-policy gradient methods} Policy gradient methods optimize a differentiable function approximation with a long-term return objective \cite{PolicyG}. And the actor-critic algorithm introduced a training paradigm based on policy gradient \cite{onAC}. Then, the Conservative Policy Iteration was proposed in \cite{Approx} for an approximate greedy policy improvement guarantee. Base from that guarantee, the Trust Region Policy Optimization algorithm \cite{TRPO} specialized the approximate policy surrogate objective with a Kullback-Leibler (KL) divergence trust region. Constrained Policy Optimization \cite{CPO} derived a policy optimization approach based on the trust region method, which guarantees monotonic increase in reward and constraints on other costs. Proximal Policy Optimization \cite{PPO} restricts the policy update by clipping the probability ratio between current and trained policies. 

Although the ratio clipping mechanism in PPO is a simple first-order optimization technique, it might yield a pessimistic estimate of the objective function, which could cause gradient loss at the end of each training session. Thus, there are varied researches that focus on the surrogate objective. Truly PPO and Trust Region-Guided PPO \cite{TrulyPPO,TrustRegionG} analyzed the scope of the clipping mechanism in PPO and applied a trust region recall mechanism to prevent the policy from being pushed away during training. Early Stopping Policy Optimization (ESPPO) \cite{ESPPO} showed that clipping could fail as the number of optimization steps increases in an epoch, which would stop the optimization according to the ratio deviation. Chen et al. \cite{TheSuffi} applied a sigmoid function to the ratio in the surrogate objective to accelerate the learning process. Besides, researches such as \cite{Implementationmatter} and \cite{whatmatterson} also investigated various implementation details and code-level optimizations through extensive empirical analysis of TRPO and PPO algorithms. 

\subsubsection{Off-policy gradient methods} Off-Policy Actor-Critic \cite{offPAC} presented the off-policy actor-critic algorithm, enabling temporal difference learning from behavioral policies with eligibility traces. And the Deep Deterministic Policy Gradient (DDPG) algorithm \cite{DPG,DDPG} limits the stochastic policy to be deterministic, and employs off-policy gradient to address challenges in continuous action spaces. However, such learning patterns may introduce high bias due to the distribution shift caused by off-policy data. Thus, Twin Delayed DDPG (TD3) \cite{TD3} and Soft Actor-Critic (SAC) \cite{SAC} adopt numerical tricks such as target policy smoothing or maximum entropy to mitigate the bias. 

\subsubsection{Mixed on-policy and off-policy gradient methods} With the goal of balancing the stability of on-policy methods and the sample efficiency of off-policy methods, there are methods that combine on-policy and off-policy characteristics. Interpolated Policy Gradient method \cite{Qprop, InterpolatedPG} associated the likelihood ratio gradient from an on-policy actor with the deterministic gradient through an off-policy fitted critic. And Policy-on Policy-off Policy Optimization \cite{P3O} developed an algorithm that interleaves off-policy actor gradient updates with on-policy updates. Then, Generalized PPO \cite{GePPO} derived a generalized policy improvement lower bound with clipping mechanism from on-policy improvement lower bound, supporting off-policy sampling. On the other hand, Off-Policy PPO \cite{Off-PPO} analyzed the degree of off-policy update distance, and then clarified the use of off-policy data does not harm the stability of PPO algorithm. 

In addition, as replay buffer and offline training iteration that make deep neural networks suitable for reinforcement learning \cite{DQN}, a recent study \cite{TheSuffi} argued that \textit{the nature of reinforcement learning is truly off-policy}. However, this perspective could be considered somewhat one-sided. Regarding the reference policy at training steps, the fundamental distinction between the so-called \textit{\textbf{on-policy}} and \textit{\textbf{off-policy}} methods lies in whether it is only the current one or also incorporates prior policies. 

\section{Preliminaries}\label{sec:pre}

Consider a finite discounted Markov decision process (MDP) defined by the tuple $(S,A,P,r,\gamma)$, with a state space $S$, an action space $A$, a transition probability distribution $P(s'|s,a):S \times A \times S \rightarrow [0,1]$ and a reward function $r:S \times A \times S \rightarrow \mathbb{R}$ that provides real number feedback of the transition with a discount factor $\gamma \in (0,1)$. Also, $\pi :S \times A\rightarrow[0,1]$ denotes a stochastic policy, $d_\pi(s_t)$ and $d_\pi(s_t,a_t)$ denote the state and state-action marginal distribution induced by policy $\pi$. Starting from a starting state distribution $\mu$, the discount factor $\gamma$ can be taken into the future state distribution $d_{\pi,\mu}(s)=(1-\gamma)\sum_{t=0}^{\infty}\gamma^t P(s_t=s|\mu,\pi)$. Our goal is to optimize a policy that maximize the expected total discounted reward $J(\pi)=\mathbb{E}_{(s_t,a_t)\sim d_\pi}\left[ \sum_{t=0}^\infty \gamma^t r(s_t) \right]$.

During the optimization, we also use the following standard definitions of the value function $V_\pi(s_t) = \mathbb{E}_{s_{t+1:\infty},a_{t:\infty}}\left[ \sum_{l=0}^\infty \gamma^l r(s_{t+l}) \right]$, the state-action value function $Q_\pi(s_t,a_t) = \mathbb{E}_{s_{t+1:\infty},a_{t+1:\infty}}\left[ \sum_{l=0}^\infty \gamma^l r(s_{t+l}) \right]$, and the advantage function $A_\pi(s,a) = Q_\pi(s,a)-V_\pi(s)$.

\subsection{Policy Gradient}

Policy gradient maximize the expected total reward by its first-order optimization \cite{PolicyG}:
\begin{equation}
    \frac{\partial J(\pi)}{\partial\theta}=\sum_{s}d_{\pi}(s)\sum_{a}\frac{\partial\pi(s,a)}{\partial\theta}Q_{\pi}(s,a).
\end{equation}

However, optimizing the policy gradient in terms of the state-action value $Q$ would further amplify the gradient value, potentially resulting in increased variance and even divergence. To address this issue, the advantage function, which is subtracted a baseline from $Q$, is widely adopted to decrease the variance of the gradient estimation without changing the equality. The Generalized Advantage Estimation (GAE) \cite{GAE} is an effective variance reduction technique for policy gradient optimization:
\begin{equation}
    \hat A_{t}^{{\rm GAE}(\gamma,\lambda)}=\sum_{l=0}^\infty (\gamma\lambda)^l\delta_{t+l}^V
\end{equation}
which represents an exponentially weighted average of $L$ $k$-step estimators. Then, the following on-policy gradient methods incorporate $A$ into their optimization objective functions and utilize GAE for computation during algorithmic implementation. 

\subsection{Policy Improvement Lower Bound}\label{sec:pre:PILB}
The Conservative Policy Iteration was proposed in \cite{Approx} for an approximate greedy policy improvement guarantee. Base from that guarantee, the Trust Region Policy Optimization algorithm \cite{TRPO} specialized the approximate policy surrogate objective with a Kullback-Leibler (KL) divergence trust region. Constrained Policy Optimization \cite{CPO} derived a constrained policy optimization from the trust region method that guarantees monotonic increase in reward and satisfaction of constraints on other costs. Then, the derived policy improvement lower bound can be written as \cite{CPO}:

\begin{lemma}[Policy Improvement Lower Bound]\label{lemmaLB}
    From the current policy $\pi_t$, each training policy $\pi$ has:
    \begin{equation}
        \begin{aligned}
            J(\pi)-J(\pi_t) \ge & \frac{1}{1-\gamma} \mathop{\mathbb{E}}\limits_{(s,a)\sim \pi_t} \left[ \frac{\pi(a|s)}{\pi_t(a|s)} A^{\pi_t}(s,a) \right]\\
            &- \frac{2\gamma C^{\pi,\pi_t}}{(1-\gamma)^2}\mathop{\mathbb{E}}\limits_{(s,a)\sim \pi_t}\Big[\mathrm{TV}(\pi,\pi_t)(s)\Big]
        \end{aligned}
    \end{equation}
        where $C^{\pi,\pi_t}=\max_{s\in S} \bigl| \mathbb{E}_{a\sim \pi(\cdot|s)}[A^{\pi_t}(s,a)] \bigr|$ and $\mathrm{TV}(\pi,\pi_t)(s)$ represents the total variation distance between the distributions $\pi(\cdot|s)$ and $\pi_t(\cdot|s)$.  
\end{lemma}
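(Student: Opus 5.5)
The plan is to follow the argument of Achiam et al.\ (CPO): prove an exact identity for $J(\pi)-J(\pi_t)$, then bound the discrepancy between the state distributions $d_\pi$ and $d_{\pi_t}$ by the total variation between the policies.

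\emph{Step 1: performance difference identity.} Telescoping the value function along trajectories of $\pi$ gives
\begin{equation*}
J(\pi)-J(\pi_t)=\frac{1}{1-\gamma}\mathbb{E}_{s\sim d_\pi,\,a\sim\pi}\bigl[A^{\pi_t}(s,a)\bigr]=\frac{1}{1-\gamma}\mathbb{E}_{s\sim d_\pi}\bigl[\bar A(s)\bigr],
\end{equation*}
where $\bar A(s)=\mathbb{E}_{a\sim\pi(\cdot|s)}[A^{\pi_t}(s,a)]$. This is Kakade--Langford. By definition $|\bar A(s)|\le C^{\pi,\pi_t}$.

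\emph{Step 2: change of measure.} Write
\begin{equation*}
\mathbb{E}_{d_\pi}[\bar A]=\mathbb{E}_{d_{\pi_t}}[\bar A]+\bigl(\mathbb{E}_{d_\pi}[\bar A]-\mathbb{E}_{d_{\pi_t}}[\bar A]\bigr).
\end{equation*}
By importance sampling, the first term equals $\mathbb{E}_{(s,a)\sim\pi_t}\bigl[\tfrac{\pi(a|s)}{\pi_t(a|s)}A^{\pi_t}(s,a)\bigr]$, which gives the surrogate term of the lemma. By H\"older's inequality, the second term is at least $-\|d_\pi-d_{\pi_t}\|_1\,\|\bar A\|_\infty\ge -\|d_\pi-d_{\pi_t}\|_1\, C^{\pi,\pi_t}$.

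\emph{Step 3: bound the state-distribution shift.} This is the main obstacle. Write $d_\pi=(1-\gamma)(I-\gamma P_\pi)^{-1}\mu$, with $P_\pi$ the state transition matrix under $\pi$. The resolvent identity $G_\pi-G_{\pi_t}=\gamma G_\pi(P_\pi-P_{\pi_t})G_{\pi_t}$, where $G=(I-\gamma P)^{-1}$, gives
\begin{equation*}
d_\pi-d_{\pi_t}=\gamma G_\pi (P_\pi-P_{\pi_t}) d_{\pi_t}.
\end{equation*}
Since $\|G_\pi\|_1\le 1/(1-\gamma)$ for column-stochastic $P_\pi$,
\begin{equation*}
\|d_\pi-d_{\pi_t}\|_1\le\frac{\gamma}{1-\gamma}\sum_{s}d_{\pi_t}(s)\sum_{s'}\Bigl|\sum_a\bigl(\pi(a|s)-\pi_t(a|s)\bigr)P(s'|s,a)\Bigr|.
\end{equation*}
The inner sum is at most $\sum_a|\pi(a|s)-\pi_t(a|s)|=2\,\mathrm{TV}(\pi,\pi_t)(s)$. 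Therefore
\begin{equation*}
\|d_\pi-d_{\pi_t}\|_1\le\frac{2\gamma}{1-\gamma}\mathbb{E}_{s\sim d_{\pi_t}}\bigl[\mathrm{TV}(\pi,\pi_t)(s)\bigr].
\end{equation*}

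\emph{Step 4: combine.} Substitute Steps 2 and 3 into Step 1 and multiply by $1/(1-\gamma)$. This yields exactly the stated bound with constant $\tfrac{2\gamma C^{\pi,\pi_t}}{(1-\gamma)^2}$. The expectation over $(s,a)\sim\pi_t$ in the penalty term reduces to one over $s\sim d_{\pi_t}$, since TV does not depend on $a$.

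The delicate points are the matrix/operator conventions in Step 3 (finiteness of $S$ helps, otherwise argue with measures) and keeping the factor $2$ between the $\ell_1$ norm and TV consistent.
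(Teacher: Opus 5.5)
Your proposal is correct and follows essentially the same route as the paper. The paper only cites this lemma from CPO, but it proves the identical statement with $\pi_{\mathrm{ref}}$ in place of $\pi_t$ (Lemma~\ref{lemma:SB}) in the appendix. That proof uses the Kakade--Langford identity, adds and subtracts the importance-weighted surrogate, applies H\"older's inequality with $\|\bar A\|_\infty = C^{\pi,\pi_t}$, and uses the state-distribution bound $\|d_\pi-d_{\pi_t}\|_1\le\frac{2\gamma}{1-\gamma}\mathbb{E}_{s\sim d_{\pi_t}}[\mathrm{TV}(\pi,\pi_t)(s)]$. The only difference is that you derive that last bound yourself via the resolvent identity, whereas the paper cites it (Lemma~\ref{sec:l3}) from Achiam et al.
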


Thereby, the objective of policy gradient transforms from the policy value $J(\pi)$ to the policy improvement $J(\pi)-J(\pi_t)$ according to the Lemma \ref{lemmaLB}. The first term on the right-hand side of the inequality is often referenced as the surrogate objective, and the second term serves for an extra refinement. 
Following policy improvement lower bound, PPO \cite{PPO} clips the probability ratio as the penalty for excessive policy updates and then ignores the second term:
\begin{equation}
    L^{\mathrm{clip}}(\theta)=\hat{\mathbb{E}}_t[\min{(r_t(\theta)\hat{A}_t,\mathrm{clip}(r_t(\theta),1-\epsilon,1+\epsilon)\hat{A}_t)}]
\end{equation}
where $\mathrm{clip}(x,l,b)=\min(\max(x,l),b)$, and the probability ratio $r_t(\theta)={\pi^\theta(a_t|s_t)}/{\pi_t^\theta(a_t|s_t)}$ is calculated using samples generated by the current policy $\pi_t$.

In practice, since the GAE is calculated outside the optimization loop, the advantage value $A$ does not contribute to gradients of the objective function. Consequently, the actual policy gradient relies primarily on the probability ratio term $r$. 

\subsection{Off-Policy Gradient}
The off-policy gradient can be written as \cite{offPAC}:  

\begin{equation}\begin{aligned} \nabla J (\pi) &=
    \nabla \left[ \sum_{s\in S}d^b(s)\sum_{a\in A} \pi(a|s)Q_\pi (s,a) \right]\\
    &\approx \sum_{s\in S}d^b(s)\sum_{a\in A} \nabla \pi(a|s)Q_\pi (s,a) \\
    &=\mathbb{E}_{(s,a)\sim b}[r_b(s,a)\nabla \log{\pi(a|s)}Q_\pi (s,a)]
\end{aligned}\end{equation}
where $r_b(s,a)={\pi(a|s)}/{b(a|s)}$, the $d^b(s)$ is the state distribution of eligibility traces generated by prior policies $b$.

Here, the probability ratio term $r$ highlights the crucial distinction between on-policy and off-policy approaches: in on-policy learning, the reference policy, i.e., the denominator, coincides with the current policy $\pi_t$, whereas in off-policy learning, it refers to previous policy $b$.

\subsubsection{Deterministic Policy Gradient}
Unlike stochastic policy gradient algorithms proceed by sampling from a parametric probability distribution $\pi_\theta(a|s)=\mathbb{P}[a|s;\theta]$, deterministic policies eliminate the variance for a direct gradient of action-value function. Consider a stochastic policy $\pi_{\mu_{\theta},\sigma}$ such that $\pi_{\mu_{\theta},\sigma}(a|s)=\nu_{\sigma}(\mu_{\theta}(s),a)$, where $\sigma$ is a parameter controlling the variance. Then, the deterministic policy $\mu$ can be obtained by setting the $\sigma \to 0$ of stochastic policy $a = \mu_{\theta}(s)$. And the gradient of deterministic policy can be written as\cite{DPG}:
\begin{equation}
    \nabla_{\theta} J (\mu_{\theta})=\mathbb{E}_{s\sim\rho^\mu}[\nabla_{\theta} \mu_{\theta}(s) \nabla_{a} Q^{\mu}(s,a)].
\end{equation}

Thus, methods based on deterministic policy gradient, including DDPG\cite{DDPG}, TD3\cite{TD3} and SAC\cite{SAC}, are predominantly applied for tasks with continuous action spaces. These policies generate continuous real-valued outputs, which are then passed to the Q-function to yield estimations for policy gradients. That is, the policy gradients of these algorithms are derived from the Q-function, rather than directly from the action probability distribution $\pi$. To ensure adequate exploration, they utilize trajectories generated by distinct behavior policies, which makes these DPG style algorithms \textit{off-policy}.

\section{Extended Off-policy Proximal Policy Optimization Algorithm}

In this work, we introduce an off-policy improvement lower bound that relates the target policy to the expectation of any reference policies, called Extended Off-Policy Improvement Lower Bound. This lower bound enables a particular off-policy sampling pattern that leverages the last $M$ policies. Then, we extend the clipping mechanism to better handle with off-policy data correspondingly. 

\subsection{Extended Off-Policy Improvement Lower Bound}

In this section, we present an expectation form of policy improvement lower bound based on the ff-Policy Improvement Lower Bound \cite{GePPO}. Accounting for distributions shift between prior policies, the Generalized Policy Improvement Lower Bound theorem supports sample reuse \cite{GePPO}, but it is still an on-policy improvement lower bound. Then, to further reduce the repeated calculations of the advantage function, we relax the conservative constraint to be a difference between training future policy and expectation of prior policies, which means an \textit{off-policy} improvement lower bound.  

First, we derive an inequality between training future policy $\pi$ and any reference policy $\pi_{\mathrm{ref}}$ from the on-policy improvement lower bound Lemma \ref{lemmaLB}.

\begin{lemma}[Policy Improvement Lower Bound Between Any Reference Policy]\label{lemma:SB}
    Consider a training future policy $\pi$ and any reference policy $\pi_{\mathrm{ref}}$, there is an inequality between them:
    \begin{equation}
        \begin{aligned}
            J(\pi)-J(\pi_{\mathrm{ref}}) \ge & \frac{1}{1-\gamma} \mathop{\mathbb{E}}\limits_{(s,a)\sim \pi_{\mathrm{ref}}} \left[ \frac{\pi(a|s)}{\pi_{\mathrm{ref}}(a|s)} A^{\pi_{\mathrm{ref}}}(s,a) \right]\\
            &- \frac{2\gamma C^{\pi,\pi_{\mathrm{ref}}}}{(1-\gamma)^2}\mathop{\mathbb{E}}\limits_{(s,a)\sim \pi_{\mathrm{ref}}}\Big[\mathrm{TV}(\pi,\pi_{\mathrm{ref}})(s)\Big]
        \end{aligned}
    \end{equation}
\end{lemma}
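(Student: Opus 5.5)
The plan is to observe that Lemma \ref{lemmaLB} never uses any property of $\pi_t$ that singles it out as the "current" policy. It is a statement about an arbitrary pair of stationary policies in the same MDP. Lemma \ref{lemma:SB} should therefore follow by substitution, $\pi_t \mapsto \pi_{\mathrm{ref}}$. To be safe, I would not simply cite Lemma \ref{lemmaLB} as a black box. Instead I would retrace the standard argument of \cite{CPO,Approx} with $\pi_{\mathrm{ref}}$ in place of $\pi_t$, checking that no step relies on $\pi_t$ being the data-collecting or most recent policy.

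The key steps are as follows.

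First, I would invoke the performance difference identity of \cite{Approx}:
\[
J(\pi)-J(\pi_{\mathrm{ref}})=\frac{1}{1-\gamma}\mathbb{E}_{s\sim d_{\pi}}\mathbb{E}_{a\sim\pi(\cdot|s)}\bigl[A^{\pi_{\mathrm{ref}}}(s,a)\bigr].
\]
This identity holds for any two policies, since its proof only telescopes $V^{\pi_{\mathrm{ref}}}$ along trajectories of $\pi$.

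Second, I would change the state distribution from $d_{\pi}$ to $d_{\pi_{\mathrm{ref}}}$. Write $\bar A(s)=\mathbb{E}_{a\sim\pi}[A^{\pi_{\mathrm{ref}}}(s,a)]$. Then the error term $\mathbb{E}_{d_\pi}[\bar A]-\mathbb{E}_{d_{\pi_{\mathrm{ref}}}}[\bar A]$ is bounded by Hölder's inequality as $\|\bar A\|_\infty\,\|d_\pi-d_{\pi_{\mathrm{ref}}}\|_1$, and $\|\bar A\|_\infty$ is exactly $C^{\pi,\pi_{\mathrm{ref}}}$.

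Third, I would use the standard bound on the discounted state-distribution shift:
\[
\|d_\pi-d_{\pi_{\mathrm{ref}}}\|_1\le\frac{2\gamma}{1-\gamma}\mathbb{E}_{s\sim d_{\pi_{\mathrm{ref}}}}\bigl[\mathrm{TV}(\pi,\pi_{\mathrm{ref}})(s)\bigr].
\]
This follows from writing $d_\pi=(1-\gamma)(I-\gamma P_\pi)^{-1}\mu$ and using the resolvent difference identity. Again, nothing here depends on which policy is "current".

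Fourth, I would replace the inner expectation over $a\sim\pi$ by importance sampling under $\pi_{\mathrm{ref}}$, which produces the ratio $\pi(a|s)/\pi_{\mathrm{ref}}(a|s)$. Combining the four steps gives the stated inequality, with the factor $\frac{1}{1-\gamma}\cdot\frac{2\gamma}{1-\gamma}$ yielding $\frac{2\gamma}{(1-\gamma)^2}$.

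The main obstacle is the importance-sampling step. It requires absolute continuity, namely $\pi(\cdot|s)\ll\pi_{\mathrm{ref}}(\cdot|s)$ on the support of $d_{\pi_{\mathrm{ref}}}$. Otherwise the ratio is undefined and the surrogate term may not equal $\mathbb{E}_{d_{\pi_{\mathrm{ref}}}}[\bar A]$. I would state this as a standing assumption; it is automatically satisfied by the Gaussian or softmax policies used in practice. Since the same assumption is implicit in Lemma \ref{lemmaLB}, the proof then closes by direct specialization.
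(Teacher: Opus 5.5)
Your proposal is correct and follows the paper's proof step for step. Both use the performance difference identity (Lemma \ref{sec:l1}), then the importance-weighted surrogate (which the paper adds and subtracts), then H\"older's inequality with $\|\bar A\|_\infty = C^{\pi,\pi_{\mathrm{ref}}}$, and finally the state-distribution shift bound (Lemma \ref{sec:l3}), giving the factor $\frac{2\gamma}{(1-\gamma)^2}$. The paper uses the absolute-continuity condition $\pi(\cdot|s)\ll\pi_{\mathrm{ref}}(\cdot|s)$ for the importance-sampling identity without stating it, so making it explicit is a small improvement rather than a different route.
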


\begin{proof}
    This inequality is derived from the disparity between any policy $\pi$ and a reference policy $\pi_\mathrm{ref}$ with the same starting state distribution \cite{Approx}. And the divergence between the state visitation distributions is bounded by the divergence of the policies \cite{CPO}. See the Appendix \ref{Approof} for details. 
\end{proof}

Then, the reference policy $\pi_{\mathrm{ref}}$ could come into prior policies $\pi_{t-i}$, $i=0,1,...,M-1$, when sampling last $M$ policies from replay buffer. Thus, we apply Lemma \ref{lemma:SB} $M$ times with the sampling probability distribution $\nu=[\nu_0 ,..., \nu_{M-1}]$, then get a mathematical expectation form of lower bound between training policy $\pi$ and prior policy $\pi_{t-i}$.

\begin{theorem}[Extended Off-Policy Improvement Lower Bound]\label{Theo1}
Consider prior policies $\pi_{t-i}$, $i=0,1,...,M-1$, where $\pi_t$ represents the current policy. Sampling from replay buffer according to random probability distribution $\nu=[\nu_0 ,..., \nu_{M-1}]$ over the prior $M$ policies and for any training future policy $\pi$, we have: 

\begin{equation}\begin{aligned}
    J(\pi)- &\mathop{\mathbb{E}}\limits_{i\sim\nu} [J(\pi_{t-i})] \ge \frac{1}{1-\gamma}\mathop{\mathbb{E}}\limits_{i\sim\nu}\Big[ \mathop{\mathbb{E}}\limits_{(s,a)\sim \pi_{t-i}} \big[ r_{t-i}(s,a) A^{\pi_{t-i}}(s,a) \big] \Big]  \\
    & - \mathop{\mathbb{E}}\limits_{i\sim\nu}\left[\frac{2\gamma C^{\pi,\pi_{t-i}}}{(1-\gamma)^2} \mathop{\mathbb{E}}\limits_{(s,a)\sim \pi_{t-i}}\big[\mathrm{TV}(\pi,\pi_{t-i})(s) \big]\right]
\end{aligned}\end{equation}
where $r_{t-i}(s,a)={\pi(a|s)}/{\pi_{t-i}(a|s)}$, $C^{\pi,\pi_{t-i}}$ and $\mathrm{TV}(\pi,\pi_{t-i})(s)$ are defined as in Lemma \ref{lemmaLB}.
\end{theorem}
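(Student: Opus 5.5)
The plan is to apply Lemma~\ref{lemma:SB} once for each reference policy and then average the resulting inequalities with the weights $\nu_i$. Since $\nu$ is a probability distribution ($\nu_i\ge 0$, $\sum_i \nu_i=1$), taking a convex combination of valid inequalities produces a valid inequality. No new analytic estimate should be needed beyond Lemma~\ref{lemma:SB}.

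Concretely, fix an arbitrary training policy $\pi$. For each $i=0,1,\dots,M-1$, instantiate Lemma~\ref{lemma:SB} with $\pi_{\mathrm{ref}}=\pi_{t-i}$:
\begin{equation*}
J(\pi)-J(\pi_{t-i}) \ge \frac{1}{1-\gamma}\mathop{\mathbb{E}}\limits_{(s,a)\sim \pi_{t-i}}\big[ r_{t-i}(s,a) A^{\pi_{t-i}}(s,a)\big] - \frac{2\gamma C^{\pi,\pi_{t-i}}}{(1-\gamma)^2}\mathop{\mathbb{E}}\limits_{(s,a)\sim \pi_{t-i}}\big[\mathrm{TV}(\pi,\pi_{t-i})(s)\big].
\end{equation*}
Here $r_{t-i}(s,a)=\pi(a|s)/\pi_{t-i}(a|s)$, which is exactly the ratio appearing in the lemma. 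Multiply the $i$-th inequality by $\nu_i\ge 0$, which preserves its direction, and sum over $i$.

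On the left, $J(\pi)$ does not depend on $i$, so it sums to $J(\pi)\sum_i\nu_i=J(\pi)$. The remaining part becomes $-\sum_i \nu_i J(\pi_{t-i})=-\mathbb{E}_{i\sim\nu}[J(\pi_{t-i})]$. On the right, linearity turns the weighted sums of the surrogate terms and of the penalty terms into $\mathbb{E}_{i\sim\nu}[\cdot]$ of the respective bracketed quantities. The constants $C^{\pi,\pi_{t-i}}$ depend on $i$, so they stay inside the outer expectation, exactly as in the statement. This yields the claimed bound.

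The only points requiring care are bookkeeping rather than analysis. First, the sampling distribution $\nu$ must be independent of $\pi$; it is a fixed distribution over indices, so the averaging is legitimate. Second, every inner expectation must be finite so that linearity applies. This holds for a finite MDP with bounded rewards, and it also requires $\pi_{t-i}(a|s)>0$ wherever $\pi(a|s)>0$ so that $r_{t-i}$ is well defined; both are implicit assumptions already used in Lemma~\ref{lemma:SB}. The substantive work therefore sits in Lemma~\ref{lemma:SB} itself, which I take as given.
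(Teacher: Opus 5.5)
Your proposal is correct and takes the same route as the paper: its appendix proof also instantiates Lemma~\ref{lemma:SB} with $\pi_{\mathrm{ref}}=\pi_{t-i}$ for each $i$, forms the $\nu$-weighted sum of the $M$ inequalities, and reads both sides as expectations over $i\sim\nu$. Your remarks on $\nu_i\ge 0$, $\sum_i\nu_i=1$, and the support condition needed for $r_{t-i}$ to be well defined are bookkeeping the paper leaves implicit.
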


\begin{proof}
    As Lemma \ref{lemma:SB} represents a lower bound between training policy $\pi$ and any reference policy $\pi_{\mathrm{ref}}$, we apply Lemma \ref{lemma:SB} $M$ times in terms of convex combination of sampling probability distribution $\nu=[\nu_0 ,..., \nu_{M-1}]$ for the reference policies $\pi_{t-i}$, $i=0,1,...,M-1$. Then calculating the expectation with respect to distribution $\nu$ on both sides of the inequality, we can get the theorem \ref{Theo1}. See the Appendix \ref{Approoftheo} for details. 
\end{proof}

Theorem \ref{Theo1} expands the policy improvement lower bound to be between future policy $\pi$ and the expectation of prior policies $\pi_{t-i}$, $i=0,1,...,M-1$ in the replay buffer. Thereby, Theorem \ref{Theo1} provides theoretical support for an off-policy training pattern containing $M$ prior policies, specifically the Extended Off-policy Pattern. This implies that policy improvement throughout learning iterations could be guaranteed by maximizing the lower bound. Then, we can refer to the Theorem \ref{Theo1} as the basic form of surrogate objective in the subsection \ref{sec:surrogate}. And the probability distribution $\nu$ describes how samples are selected from the buffer, which we set to be a uniform distribution for random sampling by default. 

\subsection{Sample Analysis}\label{sec:sample}

\begin{figure*}[th]
    \centering
    \includegraphics[width=\textwidth]{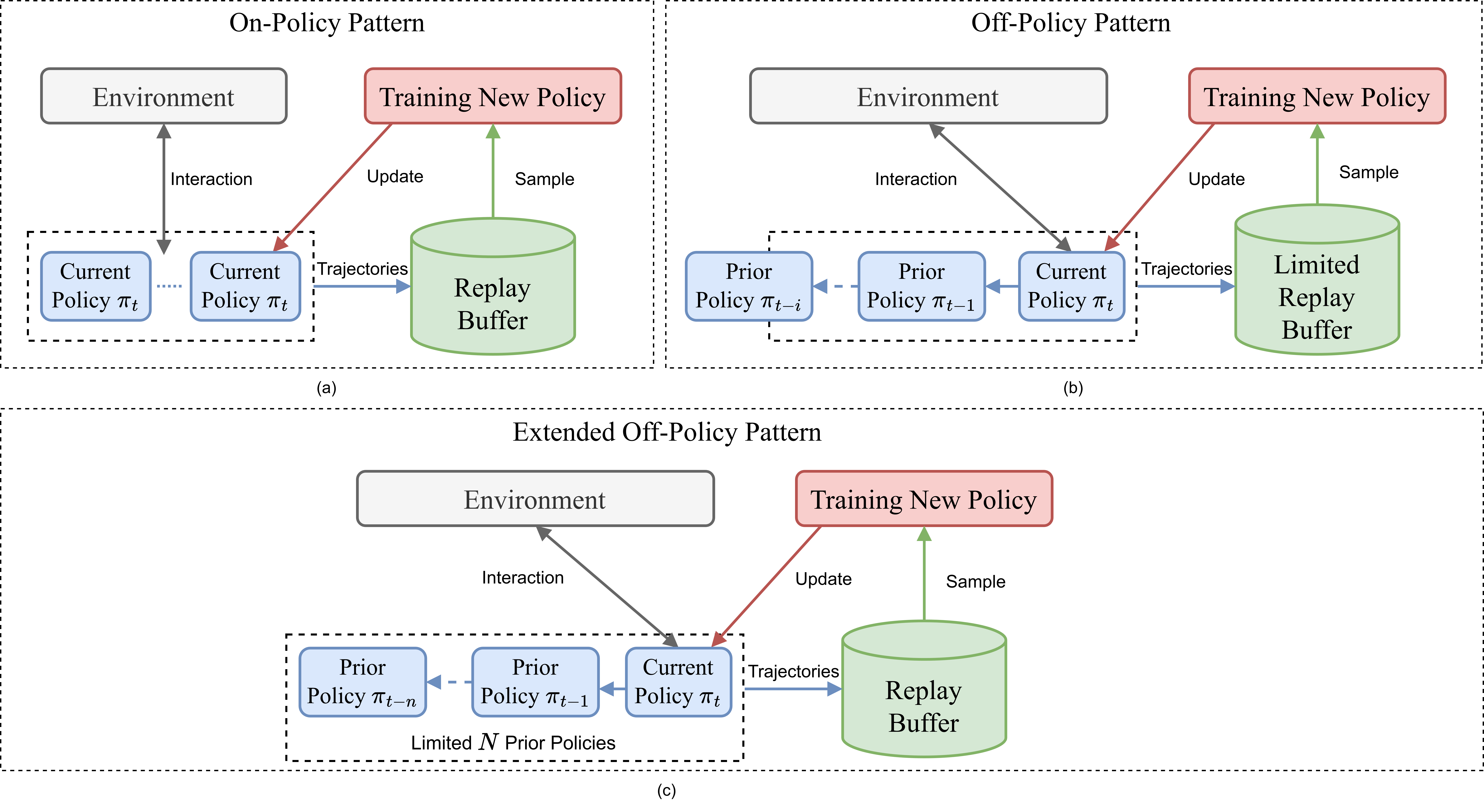}
    \caption{Intuitive comparison between DRL sampling-and-training patterns. \textbf{(a):} On-policy pattern samples from the current policy, which usually requires $N$ parallel online interactions to generate enough samples for training. \textbf{(b):} Off-policy pattern separately keeps each action-state transition from distinct periods in the limited-size queue-like replay buffer. Reusing past pieces improves sample efficiency, but the shift between policies becomes a thorny issue. \textbf{(c):} Different from limiting the number of individual entries in the replay buffer, our Extended Off-policy sampling pattern takes the trajectories by a certain policy as a cohesive unit. It could improve the efficiency while constrict the distribution shift by finite $M$ prior policies.
    }
    \label{fig:ppatt}
  \end{figure*}

Unlike most of off-policy algorithms which storing every immediate transition in a large-sized replay buffer, Theorem \ref{Theo1} suggests that only episodes from the latest $M$ policies are retained. This Theorem presents an enhancement of the on-policy pattern by extending the current policy to prior $M$ policies from the replay buffer. As shown in Figure \ref{fig:ppatt} for a visual illustration, we call this proposed training pattern as \textit{Extended Off-policy} pattern. 

In more detail, assume that $N_{\mathrm{on}}$ parallel agents are involved in one on-policy training episode. The extended off-policy employs $N_{\mathrm{off}}$ agents while maintaining $M$ prior policies in the replay buffer, where $N_{\mathrm{off}} = {N_{\mathrm{on}}}/{M}$ ensures every entry to be trained the same times. In other words, the extended off-policy pattern requires only $1/M$ of the task samples to perform the same amount of training updates as on-policy methods. This results in higher scores in evaluation tasks with the same number of interaction steps. On the other side, the \text{completely} off-policy pattern, like DDPGs, may introduce increased deviation which often makes training unstable, especially in discrete tasks.

In comparison to GePPO \cite{GePPO}, Theorem \ref{Theo1} establishes a broader expectation of a generalized policy improvement lower bound. Specifically, the reference policy for the advantage function term $A^{\pi_{\mathrm{ref}}}(s,a)$ in Theorem \ref{Theo1} is the prior policy $\pi_{t-i}$, whereas the \textit{Generalized Policy Improvement Lower Bound} refers to the current policy $\pi_t$. This distinction has a significant implication for computational efficiency: our pattern calculates the advantage function only once at the end of each online episode and reuses it from the replay buffer, while GePPO recalculates advantages for every trajectory before each training iteration. That is, when maintaining the same number of $M$, GePPO recalculates advantages $M-1$ more times than ExO-PPO.

The selection of parameter $M$ influences the amount of off-policy data included in training, with larger $M$ incorporating more past policies. When acquiring new interaction data becomes challenging, setting a larger $M$ allows for more completed training on the available data, but at cost of increasing computational complexity. Therefore, in online training tasks, we typically choose a moderate $M$ in a range of $3$ to $6$, to balance sampling efficiency and computational stability. {In the empirical evaluation,} we searched a suitable $M = 4$ that balances efficiency and stability, and the results are presented on the right side of Figure \ref{fig:paras}. In addition, the standard on-policy improvement lower bound in Lemma \ref{lemmaLB} can be recovered by setting $M = 1$, where the replay buffer always fills with transitions from the current policy $\pi_t$. 

\subsection{Extended Ratio Objective with Exponentially Decaying Edge}\label{sec:surrogate}

As mentioned in Section \ref{sec:pre:PILB}, the first term on the right-hand side of Theorem \ref{Theo1} is close to what the policy gradient optimize for, while the second term often acts as a penalty term. The former constitutes the optimization objective function, and the penalty term restrains the expected total variation distances between the training future policy $\pi$ and the last $M$ policies. By integrating the characteristics of both terms, various algorithms propose their surrogate objectives. In this section, we introduce our proposed ExO-PPO objective, the extended ratio objective with exponentially decaying edge.    

\begin{figure}[ht]
    \centering
    \includegraphics[width=\columnwidth]{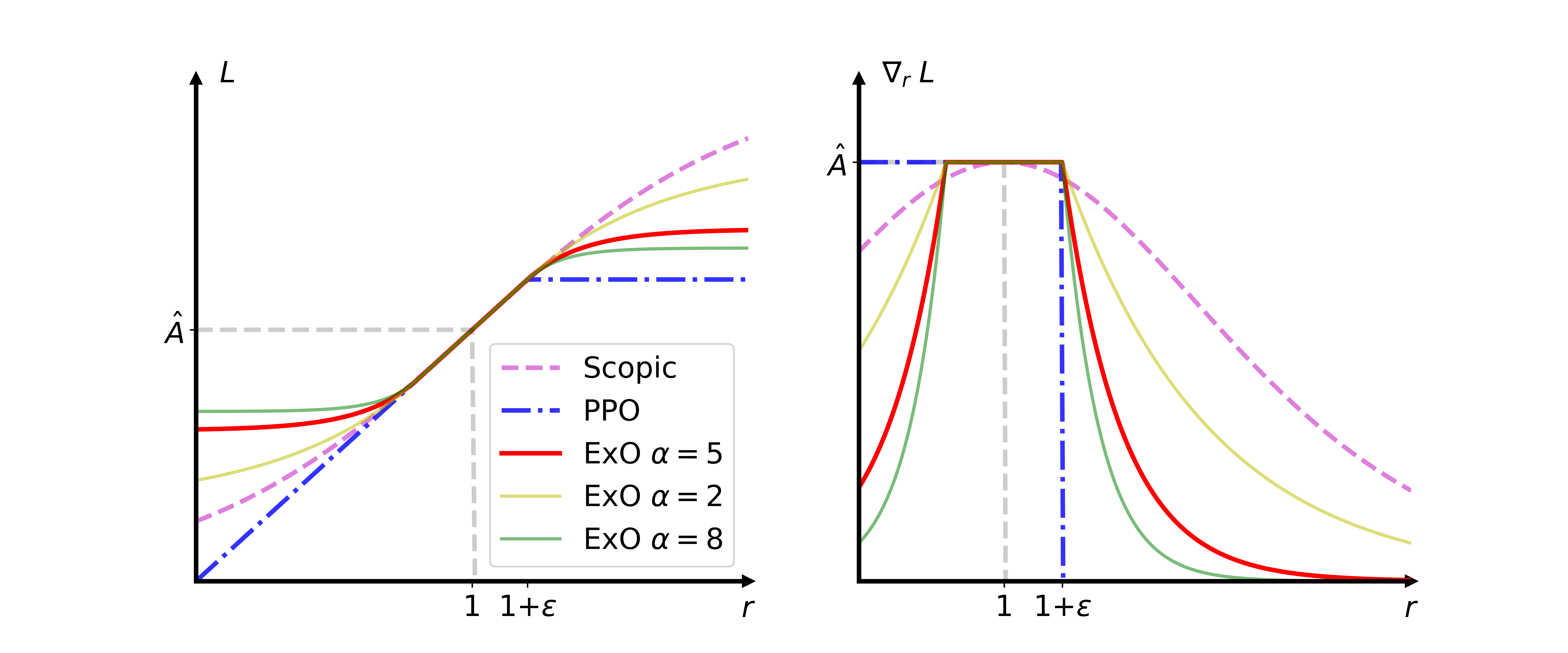} 
    \caption{Left: The optimization objective functions $L$ of stochastic policy gradient versus the probability ratio $r={\pi}/{\pi_{t-i}}$ when $\hat{A}>0$. Right: The corresponding gradients of these objective functions $\nabla_r L$ with respect to $r$ are shown. Compared with the original PPO objective (blue dashdot line) and the Scopic objective \cite{TheSuffi} with $\tau = 2$ (purple dashed line), our proposed ExO-PPO objectives with different $\alpha$ (a cluster of solid lines) are smooth curves with continuous first order derivatives. And the center symmetry point $(1,\hat{A})$ is the {\textit{on-policy point}} of optimization.}
    \label{figfunc}
\end{figure}

Since the policy gradient relies on the probability ratio term $r$ primarily, the clipping mechanism in PPO restricts the total variation between policies by truncating parts that exceed the clipping range $[1 -\varepsilon,1+\varepsilon]$.  However, the practical probability ratio may not be well constrained by the clipping range \cite{TrulyPPO,acloserl}, which could become more obvious in the off-policy setting. Off-policy data means the importance sampling ratio ${\pi}/{\pi_{t-i}}$ would deviate more sharply from the on-policy point $r=1$, whose gradient might be easily discarded by the clipping mechanism. Thus, to involve lager gradient space while properly attenuating the weights of remote parts for constraint, we propose the extended ratio objective function, a modified surrogate objective according to the extended off-Policy improvement lower bound in Theorem \ref{Theo1}: 

\begin{equation}\begin{aligned}\label{eq:ExO}
    L^{ExO}(\theta) = 
    \mathop{\mathbb{E}}\limits_{i\sim\nu}\Big[\mathop{\mathbb{E}}\limits_{(s,a)\sim \pi_{t-i}} \big[ \xi_{t-i}(\theta) & A^{\pi_{t-i}}(s,a) \big]  - \beta \mathrm{KL}(\pi,\pi_{t-i}) \Big]  
\end{aligned}\end{equation} 
where $\xi_{t-i}(\theta)$ is the segmentation function that extends the clipped ratio $r_{t-i}$ by a parameterized exponential function, and we also adopt KL divergence for the penalty term. 
Then, we define the extended ratio with exponentially decaying edge function $\xi$:

\begin{definition}[Extended Ratio with Exponentially Decaying Edge]\label{def1}
    Consider the training future policy $\pi$, the reference prior policy $\pi_{t-i}$, the probability ratio $r = \frac{\pi}{\pi_{t-i}}$, and clipping parameter $\varepsilon$. 
    Our extended ratio function keeps linear within clipping range and exponentially decaying outside clipping range, whose segmentation function $\xi_{t-i}(s,a)$ is defined as:

    \begin{equation}
        \xi_{t-i}(s,a)=\begin{cases}
        (1-\varepsilon)- (1 - \varphi^{+}_{t-i})/\alpha, &r_{t-i} < 1-\varepsilon\\ 
        r_{t-i}(s,a), & otherwise \\
        (1+\varepsilon)+ (1 - \varphi^{-}_{t-i})/\alpha, &r_{t-i} \ge 1+\varepsilon
    \end{cases} 
    \end{equation}
    where $r_{t-i}(s,a)$ is defined as in Theorem \ref{Theo1}, $\varphi^{\pm}_{t-i} = \exp\big[\alpha\times(\varepsilon \pm(r_{t-i}(s,a) - 1)) \big]$, and $\alpha$ is a parameter controlling the slope of the extended ratio. 
\end{definition}

The gradient of extended ratio function $\xi_t$ with respect to $r_t(\theta)$ is:
\begin{equation}
\frac{\partial\xi_t(s,a)}{\partial r_t(s,a)}=\begin{cases}
    \varphi^{+}_t, &r_t < 1-\varepsilon \\ 1, & (1-\varepsilon)\le r_t < (1+\varepsilon) \\
    \varphi^{-}_t, &r_t \ge 1+\varepsilon
\end{cases} \end{equation} 
where we set the same default clipping parameter $\varepsilon=0.2$, as in the clipping mechanism of PPO.

Our extended ratio function is a smooth curve with a continuous first order derivative, symmetric about the center of {on-policy} point $(1,\hat{A})$. A cluster of the objective and their gradient function are drawn in Figure \ref{figfunc} with different $\alpha$. This plot illustrates that a larger value of $\alpha$ could accelerate the decay of the gradient outside the clipping range, which is closer to the clipping mechanism. On the contrary, the opposite case offer a larger gradient space. For instance, when $r=2$, we obtain the following values of $\xi(r)$ and its gradient listed in Table \ref{table:xivalues}. In practice, we set the default value $\alpha=5$ for most tasks, which achieves relatively better results in empirical experiments. 

\begin{table}[h]
    \centering
    \begin{tabular}{ccccc} %
    \toprule
    Value of $\alpha$ & $\alpha=2$ & $\alpha=5$ & $\alpha=8$ & clipping mechanism ($\varepsilon=0.2$) \\
    \midrule
    $\xi(r=2)$              &1.60    &1.40   & 1.32  &1.2\\
    $\nabla_{r}\xi(r=2)$    &0.20    &0.018  &0.0016 &0\\
    \bottomrule
    \end{tabular}
    \caption{The value of $\xi$ and its gradient vary with different values of $\alpha$ when $r=2$ and $\varepsilon=0.2$.}
    \label{table:xivalues}
\end{table}

Figure \ref{figfunc} also compares the objective functions and their gradients between the clipping objective, the Scopic objective \cite{TheSuffi}, and our extended objective. Our curve takes a wider range of variance in ratio than the clipping objective, and better suppresses the excessive large gradients than the Scopic objective.


Since the extended ratio function relaxes the policy constraint, we also adopt KL divergence as the penalty term to further ensure training stability, as many other methods do \cite{TRPO,TheSuffi,ESPPO}. Constrained policy optimization \cite{CPO} declares that TV-divergence and KL-divergence are related by: 
\begin{equation}
    \mathop{\mathbb{E}}\limits_{s \sim d^{\pi}}\big[D_\mathrm{TV}(\pi,\pi')(s) \big] \to \sqrt{\frac{1}{2}\mathop{\mathbb{E}}\limits_{s \sim d^{\pi}}\big [D_\mathrm{KL}(\pi,\pi')(s) \big]}.
\end{equation}

Then, our complete algorithm is detailed in the following subsection.

\subsection{Algorithm} 

Like other actor-critic pattern deep reinforcement algorithms, ExO-PPO has two networks: a policy network and a value network. These two are both deep neural networks. The surrogate objective of policy network is the extended ratio objective, given by $L^{ExO}$ in Equation (\ref{eq:ExO}). Training entries are sampled from a replay buffer following the extended off-policy sampling pattern. Critic network approximates the advantage values of states $\hat{A}^\pi$ using generalized advantage estimation \cite{GAE}, which is calculated closely analogous to TD($\lambda$) with lower variance and bias. During training, the learning rate would gradually decay for better convergence. Algorithm \ref{algor1} presents the pseudocode of ExO-PPO. 

\begin{algorithm}[tb]
    \caption{Extended Off-policy Proximal Policy Optimization Algorithm (ExO-PPO)}
    \label{algor1}
    {Initialize the replay buffer $D$, whose capacity is the number of prior policies $M$ times the number of episodic steps $T$}
    \begin{algorithmic} 
    \For {episode = 1, $N$}
        \For {t = 1, $T$}
        \State Collect $n$ transitions $(s_t,a_t,r_t,s_{t+1})$ with {current} $\pi_t$,
        \State Compute advantage estimates $\hat{A}_t$, 
        \State Store transitions $(s_t,a_t,r_t,s_{t+1},\pi_t(s_t),\hat{A}_t)$ in replay buffer $D$.
        \EndFor
    \State {Randomly} sample batches of transitions $(s_j,a_j,r_j,s_{j+1},\pi_j(s_j),\hat{A}_j)$ from $D$ {containing $M$ prior policies},
    \State Approximately optimize the {Extended Off-policy} objective $L^{ExO}(\theta)$ through minibatch stochastic gradient ascent.
    \EndFor
    \end{algorithmic}
\end{algorithm}

\section{Experiments} 


In this section, we analyze the performance of ExO-PPO algorithm on tasks in Gymnasium \cite{gym,gymnasium} and MuJoCo \cite{mujoco} environments. To speed up data collecting, we adopted the latest EnvPool 0.8.4 \cite{envpool} as our training simulator, which is a batched environment pool with Pybind and thread pool. Specifically, we consider several single-agent Atari games environments, including Pong-v5, Breakout-v5, etc. The observation space on these environments are stacked gray images and the action policy is under Gibbs distribution. And continuous control tasks Ant-v4, HalfCheetah-v4 and Walker2d-v4 from MuJoCo are learned by parameterized policies following Gaussian policy. 

We compare ExO-PPO algorithm with a few novel PPO variants ESPPO \cite{ESPPO}, P3O-Scopic \cite{TheSuffi}, and the original PPO algorithm as the baseline. All of these algorithms are applied with the same network structure and learning rates, and tested in varied environments with four random seeds. In detail, for these on-policy algorithms, we set the default parallel environment number to $8$, episodic run steps to $256$ per environment, and the training batch size to 256. And, the default clipping parameter for PPO is $\varepsilon=0.2$, temperature $\tau = 2$ for P3O-Scopic, and the stopping threshold $\delta = 0.25$ for ESPPO. 
For the off-policy algorithms, we evaluated the SAC \cite{SAC,dissac} and TD3 \cite{TD3} (in continuous control tasks) with the same network structure, replay buffer size, and learning rates as on-policy algorithms. For mixed off-policy algorithms such as GePPO \cite{GePPO}, off-policy PPO \cite{Off-PPO} in (analyzed in Section \ref{abla}), and ExO-PPO, we select the number of prior policies $M=4$ and parallel environment number to $2$, ensuring that each sample is calculated the same times as on-policy algorithms. Regarding the edge decaying factor in ExO-PPO, we set it is $\alpha=5$ by default. We empirically investigate this parameter along with $M$ in the section \ref{sec:para}. Additionally, since no data augmentation is applied in the replay buffer, the corresponding policy weights $\nu$ are a uniformly distributed.



\begin{figure*}[ht]
    \centering
    \includegraphics[width=\textwidth]{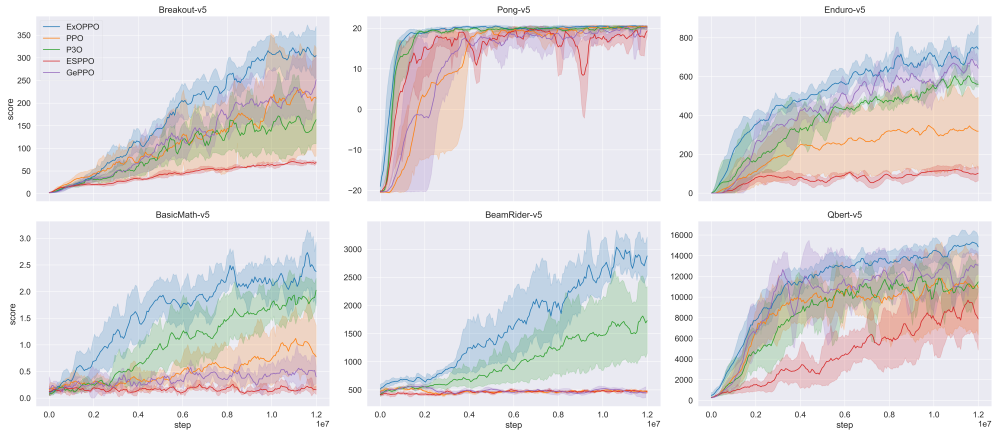} 
    \caption{Average performance throughout training across Atari games. Shading denotes the range of multiple runs and solid lines are the mean return of each algorithm. ExO-PPO (the blue line) outperforms in the most of the tasks in both early and final stages.}
    \label{per1}
\end{figure*}

\begin{figure*}[ht]
    \centering
    \includegraphics[width=\textwidth]{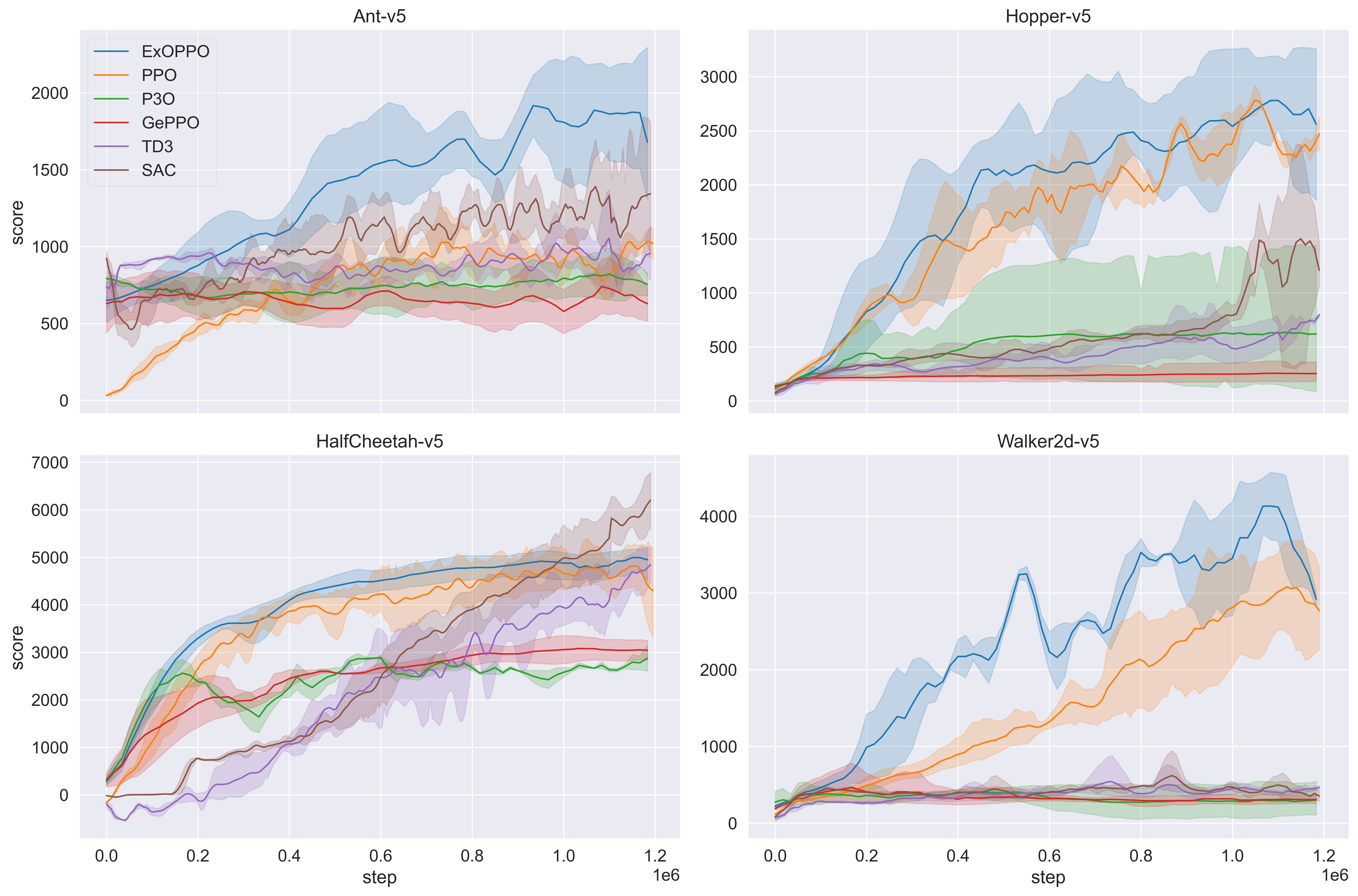}
    \caption{Average performance throughout training across continuous MuJoCo tasks.}
    \label{fig:mujovalue}
\end{figure*} 

\begin{figure*}[ht]
    \centering
    \includegraphics[width=\textwidth]{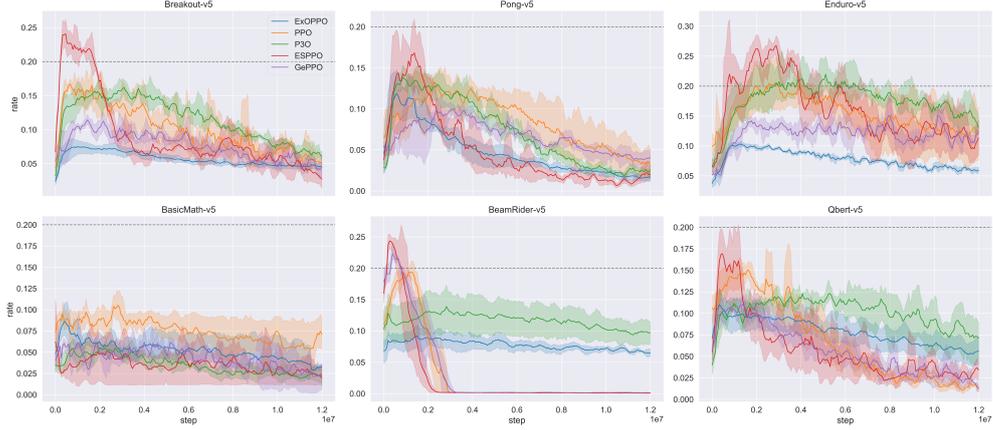}
    \caption{Absolute logarithm of probability ratio throughout training across Atari games. Red vertical dashed lines represent the absolute logarithm of the probability ratio $y=0.2$, which is an indicator of whether the surrogate objective restrains the disparity between training updates.}
    \label{rate1}
\end{figure*}
\subsection{Scoring Performance} We first evaluated several common Atari games for discrete tasks over 12M interaction steps, using the test episode scores after multiple training iterations as statistical metric. As illustrated in Figure \ref{per1}, ExO-PPO demonstrates fast and reliable learning across most tasks, resulting in improved performance in test episodes across most environments in both early and final stages. In the Pong environment, which is simple but unique due to a score ceiling, ExO-PPO reaches the maximum score with minimal training steps and exhibits relatively low performance fluctuations. 

For the continuous control tasks, we also compared ExO-PPO with other representative off-policy algorithms, including TD3 \cite{TD3} and SAC \cite{SAC}.
Owing to the properties of stochastic policy gradient, Gaussian policies exhibit high sensitivity to their initial mean, variance, and rates of change for these parameters. As demonstrated in Figure \ref{fig:mujovalue}, with proper hyperparameter tuning, ExO-PPO can also achieve competitive scores on these tasks. 


\subsection{Stability and Training Efficiency} 
To grasp how the probability ratio ${\pi}/{\pi_\mathrm{ref}}$ changes and affects the stability of training during training, we calculate the expectation of its absolute logarithm, defined as $y = \mathbb{E} |\ln{\pi} - \ln{\pi_\mathrm{ref}}|$. Since the probability ratio provides the primary gradient for policy update, its value significantly influences and reflects the stability of training. As the clipping parameter is set $\varepsilon = 0.2$, our measurements yield $y(1.2)=0.182$ and $y(0.8)=0.223$. Based on that, we consider $y=0.2$ as an indicator to assess whether the surrogate objective properly restrains the probability ratio.

Figure \ref{rate1} illustrates the results of this measure. Together with Figure \ref{per1}, the training performance would be adversely affected if the ratio is too large or too small, with ESPPO serving as an example. Since ESPPO does not cut the probability ratio until it reaches some threshold, its curves in Figure \ref{rate1} exhibit the largest variation scope while achieving relatively the lowest scores in Figure \ref{per1}. In contrast, the probability ratio of ExO-PPO changes smoothly throughout the training process, which indicates that the extended ratio objective effectively limits variation for stable training even in an off-policy setting with higher variance. 


\begin{figure}[ht]
    \centering
    \includegraphics[width=\columnwidth]{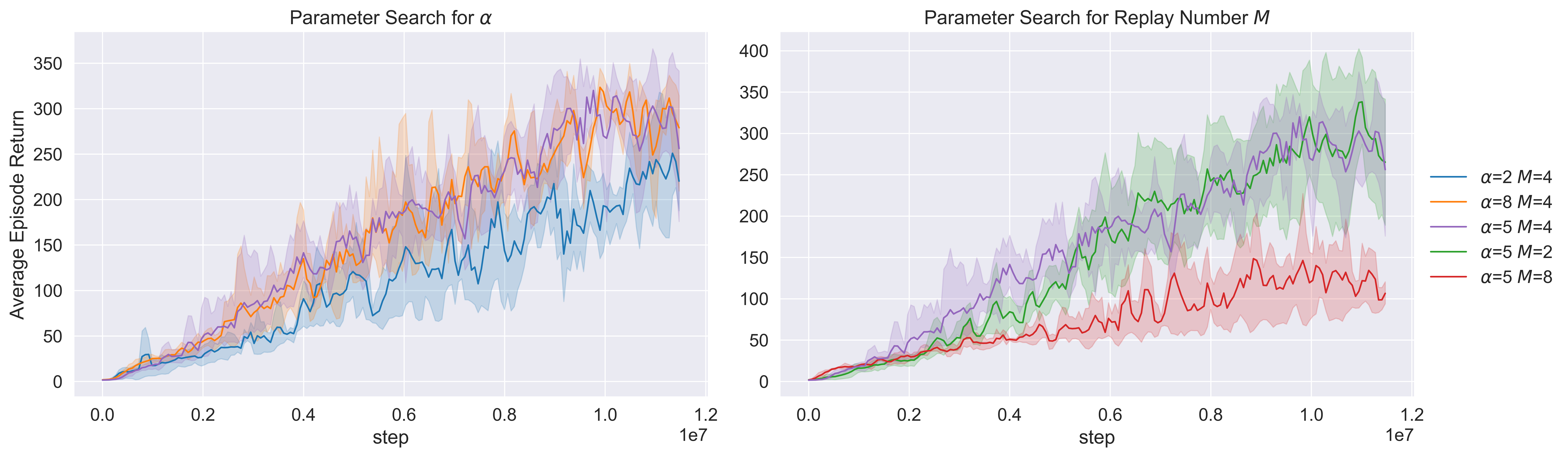}
    \caption{Comparative analysis of two key hyperparameters: the exponential decay rate of the extended edge $\alpha$ (left) and the number of prior policies $M$ in the replay buffer (right). The purple line illustrates superior performance by the default hyperparameters: $\alpha=5$ and $M=4$.}
    \label{fig:paras}
\end{figure}

\subsection{Parameter Search}\label{sec:para} The exponential edge control parameter $\alpha$ and the number of prior policies $M$ in the replay buffer are two special hyperparameters in ExO-PPO algorithm that are explored in this section. The $\alpha$ are defined in the Definition \ref{def1} that controls the attenuation scope of the gradient. Objective curves with different $\alpha$ {are drawn} in Figure \ref{figfunc} and {their performances are compared} on the left side of Figure \ref{fig:paras}. {As mentioned before, a smaller $\alpha$ would result in less aggressive gradient attenuation away from the on-policy point, enabling greater exploration. Conversely, a larger $\alpha$ brings the algorithm closer to PPO. Referring to the performance comparison, a moderate $\alpha$ could be chosen as the default value.}

The effect of $M$ is discussed in the section \ref{sec:sample} and figured on the right side of Figure \ref{fig:paras}. A larger value of $M$ enhance the utilization of off-policy samples but also widens the distribution shift. Conversely, a smaller $M$ is considered to increase the training stability. Our experimental results demonstrate that a moderate value of $M$ could achieve relatively higher scores. In this study, we select $\alpha=5$ and $M=4$ as default hyperparameters, balancing training stability and the sample efficiency.

\subsection{Ablation Study}\label{abla} 
In this subsection, we analyze the impacts of off-policy improvement lower bound and the extended ratio objective relatively on our algorithm. Related results are displayed in the Figure \ref{fig:alba},\ref{fig:rat2}.

\begin{figure}[ht]
    \centering
    \includegraphics[width=\columnwidth]{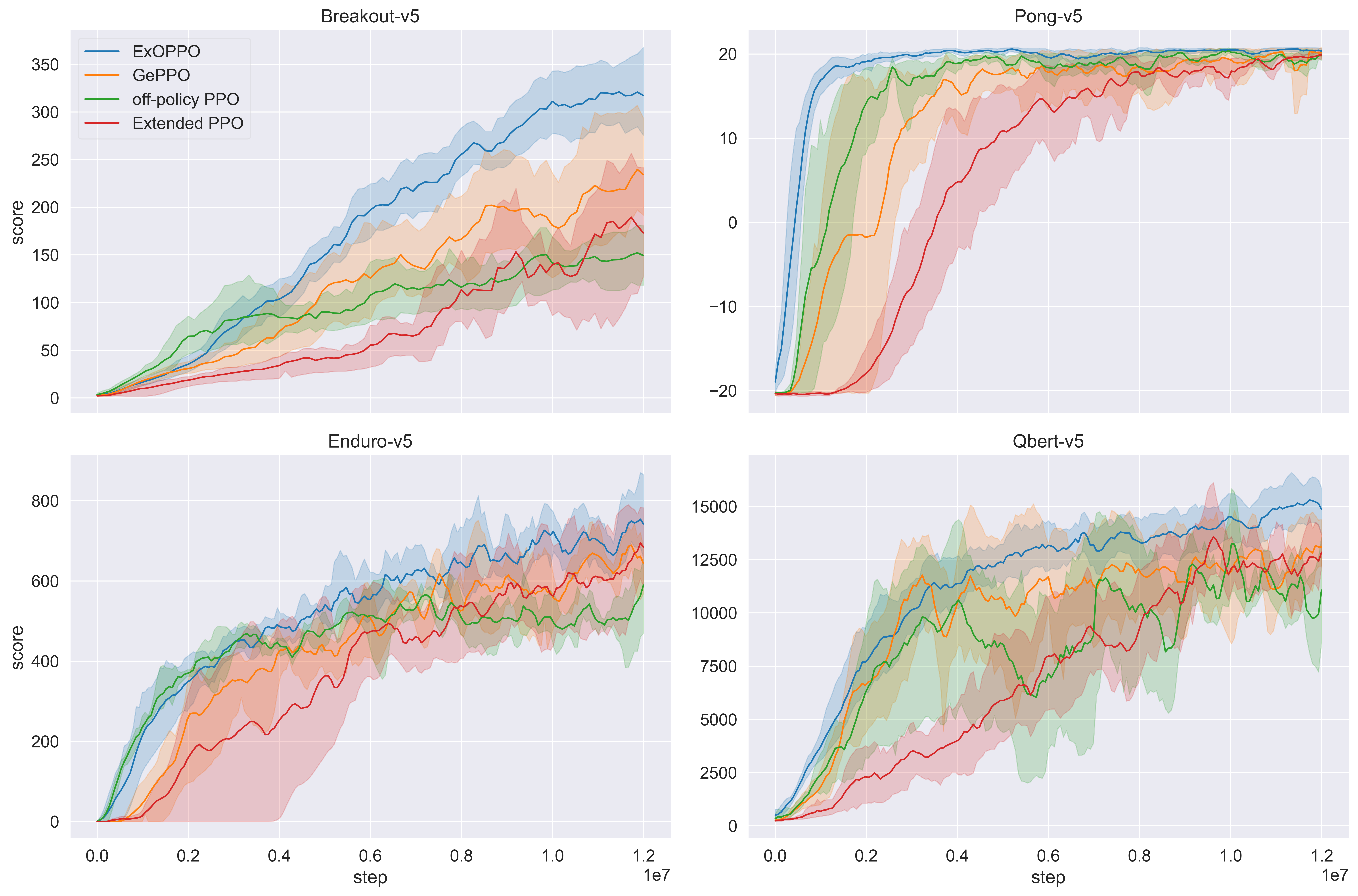}
    \caption{Performance throughout training across Atari games for ablation study.}
    \label{fig:alba}
\end{figure}

\begin{figure*}[ht]
    \centering
    \includegraphics[width=\columnwidth]{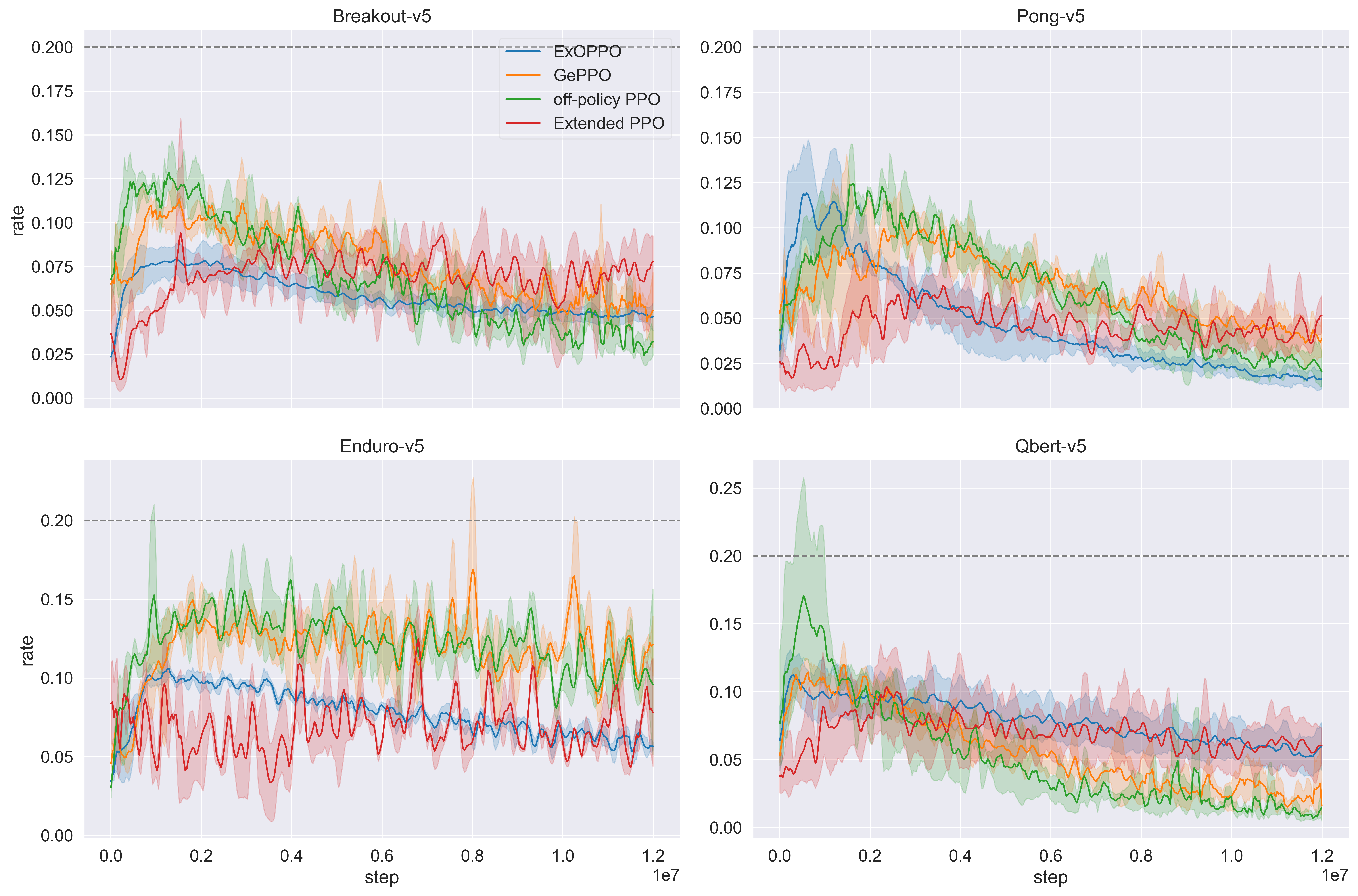}
    \caption{Absolute Logarithm of probability ratio throughout training across Atari games for ablation study.}
    \label{fig:rat2}
\end{figure*}

\subsubsection{Extended Off-policy Improvement Lower Bound}\label{albstu}

This subsection evaluates the effectiveness of the extended off-policy improvement lower bound by comparing with \textit{Extended PPO}, an on-policy algorithm with our extended ratio objective additionally. During training and testing, the Extended PPO and the ExO-PPO share the same edge parameter $\alpha=5$ and other training hyperparameters. As scoring results presented in the Figure \ref{fig:alba}, Extended PPO showed slower learning improvements than ExO-PPO in various tasks. This is because that when the ratio deviates from the center during training, its gradients are retained and compressed by the expanded edge rather than being truncated. Consequently, the batch gradients are reduced through averaging, instead of being a fixed value of $1$ as in clipped ratio, leading to decreased training efficiency. This trend can also be observed in the Figure \ref{fig:rat2}, where the overall change of training policy is relatively small in the early stages. Furthermore, EXO-PPO also outperforms both off-PPO and GePPO on most of these tasks. These results suggesting that incorporating the extended off-policy sampling pattern within the \textit{Extended Off-policy Improvement Lower Bound} can enhance training performance.


\begin{figure}[ht]
    \centering
    \includegraphics[width=\textwidth]{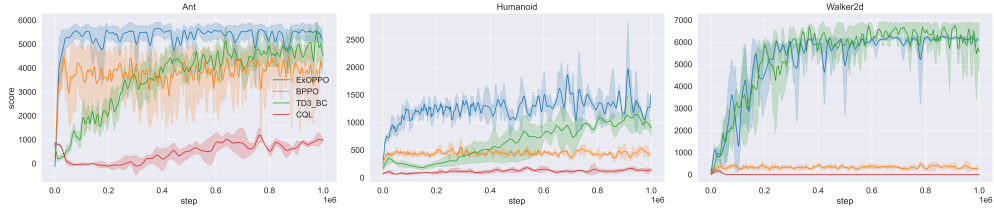}
    \caption{Offline training across MuJoCo tasks for ablation study.}
    \label{fig:offline}
\end{figure}

\subsubsection{Extended Ratio Objective}

In this subsection, we examine whether the extended ratio objective can leverage the shifts in off-policy data while mitigating policy drift. In light of this, we compare the performance of ExO-PPO in the case of offline training with other offline RL algorithms, including Behavior PPO \cite{BPPO}, Conservative Q-Learning \cite{ConOffQ}, and TD3+BC \cite{MinOff} algorithms. As an extreme form of off-policy learning, offline reinforcement learning could lead to more overestimation of those out-of-distribution actions. A common approach to mitigate this issue is to constrain or regularize policy updates. Particularly, on-policy algorithms inherently adopt a conservative objective function for stability \cite{BPPO}. As defined in Section \ref{sec:surrogate}, our extended ratio objective could also be suitable for offline training without any extra constraint. These three comparative offline RL algorithms also exemplify three distinct reinforcement learning paradigms that also follow a similar idea of update regularization, namely stochastic policy gradient \cite{BPPO}, Q-learning \cite{ConOffQ}, and deterministic policy gradient \cite{MinOff}. We evaluated these algorithms across MuJoCo tasks using the Minari dataset \cite{minari}, which hosts a collection of popular Offline Reinforcement Learning datasets, including D4RL \cite{d4rl} benchmarks. Notably, the stochastic policy algorithms, including BPPO and our ExO-PPO, necessitates estimation of the reference policies. Accordingly, we also adopt Gaussian policies as references, where their mean takes the action values from the offline data and its standard deviation decreases over iterations. The results presented in the Figure \ref{fig:offline} demonstrate that our ExO-PPO algorithm attains superior or comparable training performance in most cases of offline training with fewer steps. This indicates that our proposed objective function, the extended ratio objective, is suitable for off-policy or even offline data without additional modifications. 

Overall, through comparative experiments and ablation studies, Exo-PPO, the synergistic combination of extended off-policy improvement lower bound and extended ratio objective together contributes to performance enhancements.

\section{Conclusion}
In this paper, we introduce an Extended Off-policy Proximal Policy Optimization (ExO-PPO) algorithm, which integrates off-policy samples into proximal policy optimization with a novel extended objective. Supported by the extended off-policy policy improvement lower bound, the extended off-policy sampling pattern enables our algorithm to reliably learn from off-policy data. Then, the extended ratio allows for optimizing policy gradients within a broader space while maintaining constraints on divergence. However, it's also shares certain limitations with the original PPO framework, such as sensitive to hyperparameters in various tasks.

For future works, there remain several mathematical ambiguities that require further study. These uncertainties suggest potential areas for investigation such as the impact of distribution shift, the configuration of initial distributions, and the efficiency of gradient propagation, especially in continuous tasks. Exploring these aspects could significantly contribute to the development of a more general reinforcement learning algorithm, highlighting its importance as a future research direction.

\backmatter








\section*{Declarations}


\begin{itemize}
\item Funding 
\item There is no conflict of interests. 
\item We have used the datasets that are publicly available in the literature. Data availability 
\item Our cods will be available at https://github.com/HarriWxy/ExO-PPO if accepted. 
\item Author contribution
\end{itemize}

\begin{appendices}

\section{Proof Details} 
\subsection{Proof of Lemma \ref{lemma:SB}} \label{Approof}
We begin with the stating results from Kakade and Langford \cite{Approx} and Achiam et al. \cite{CPO}. 

\begin{lemma}\label{sec:l1} 
    Consider a reference policy $\pi_\mathrm{ref}$, for any future policy $\pi$ with the same starting state distribution, we have \cite{Approx}
    \begin{equation}
        J(\pi)-J(\pi_\mathrm{ref})=\frac{1}{1-\gamma} \mathop{\mathbb{E}}\limits_{(s,a)\sim \pi} \left[ A^{\pi_\mathrm{ref}}(s,a)  \right]
    \end{equation}
\end{lemma}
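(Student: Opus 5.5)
The plan is the classical telescoping argument of Kakade and Langford. I read $\mathbb{E}_{(s,a)\sim \pi}$ as $s\sim d_{\pi,\mu}$ and $a\sim\pi(\cdot|s)$, and I assume rewards are bounded so that all discounted sums converge absolutely.

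First, I write both performances against the common start distribution $\mu$. We have $J(\pi_\mathrm{ref})=\mathbb{E}_{s_0\sim\mu}[V_{\pi_\mathrm{ref}}(s_0)]$ and $J(\pi)=\mathbb{E}_{\tau\sim\pi}\big[\sum_{t\ge0}\gamma^t r_t\big]$, where $\tau=(s_0,a_0,s_1,\dots)$ is generated by $\mu$, $\pi$ and $P$. Because both policies share $\mu$, I can write $J(\pi_\mathrm{ref})=\mathbb{E}_{\tau\sim\pi}[V_{\pi_\mathrm{ref}}(s_0)]$, since the integrand depends only on $s_0$.

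Second, I use the one-step identity
\[
A^{\pi_\mathrm{ref}}(s,a)=\mathbb{E}_{s'\sim P(\cdot|s,a)}\big[r(s,a,s')+\gamma V_{\pi_\mathrm{ref}}(s')-V_{\pi_\mathrm{ref}}(s)\big].
\]
This follows from the Bellman equation $Q_{\pi_\mathrm{ref}}(s,a)=\mathbb{E}_{s'}[r+\gamma V_{\pi_\mathrm{ref}}(s')]$. Along a trajectory drawn from $\pi$, the Markov property and the tower rule (conditioning on $s_t,a_t$) give
\[
\mathbb{E}_{\tau\sim\pi}\Big[\sum_{t\ge0}\gamma^t A^{\pi_\mathrm{ref}}(s_t,a_t)\Big]=\mathbb{E}_{\tau\sim\pi}\Big[\sum_{t\ge0}\gamma^t\big(r_t+\gamma V_{\pi_\mathrm{ref}}(s_{t+1})-V_{\pi_\mathrm{ref}}(s_t)\big)\Big].
\]
The value terms telescope: the partial sum up to $T$ leaves $\gamma^{T+1}V_{\pi_\mathrm{ref}}(s_{T+1})-V_{\pi_\mathrm{ref}}(s_0)$. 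The first term vanishes as $T\to\infty$ because $V$ is bounded by $R_{\max}/(1-\gamma)$. The right-hand side is therefore $J(\pi)-\mathbb{E}_{s_0\sim\mu}[V_{\pi_\mathrm{ref}}(s_0)]=J(\pi)-J(\pi_\mathrm{ref})$.

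Third, I convert the trajectory sum into an expectation under the discounted visitation distribution. I exchange the sum and the expectation, which is justified by boundedness and dominated convergence. This gives
\[
\sum_{t\ge0}\gamma^t\sum_{s}P(s_t=s|\mu,\pi)\sum_a\pi(a|s)A^{\pi_\mathrm{ref}}(s,a)=\frac{1}{1-\gamma}\mathop{\mathbb{E}}\limits_{s\sim d_{\pi,\mu},\,a\sim\pi}\big[A^{\pi_\mathrm{ref}}(s,a)\big],
\]
using the definition $d_{\pi,\mu}(s)=(1-\gamma)\sum_t\gamma^tP(s_t=s|\mu,\pi)$. This yields the claimed identity.

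I expect the main obstacle to be bookkeeping rather than a deep estimate. Two points need care. The first is the interchange of the infinite sum with expectations and the vanishing tail $\gamma^{T+1}V_{\pi_\mathrm{ref}}(s_{T+1})$; both follow from bounded rewards and $\gamma<1$. The second is that the paper's reward convention switches between $r(s)$ and $r(s,a,s')$. I would fix $r(s,a,s')$ throughout, so that the Bellman identity for $A^{\pi_\mathrm{ref}}$ holds with the expectation over $s'$. Then the tower-rule step is exactly where the Markov property of the $\pi$-trajectory is used.
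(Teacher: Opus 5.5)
Your proposal is correct and is the classical Kakade--Langford proof; the paper does not prove this lemma itself but only cites that source. Your handling of the shared start distribution, the vanishing tail $\gamma^{T+1}V_{\pi_\mathrm{ref}}(s_{T+1})$, and the reward convention is sound, and bounded rewards hold automatically because the paper assumes a finite MDP.
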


\begin{lemma}\label{sec:l3}
    Consider a reference policy $\pi_\mathrm{ref}$ and a future policy $\pi$. Then, the divergence (L1 norm) between the state visitation distributions $d^{\pi_\mathrm{ref}}$ and $d^{\pi}$ is bounded by \cite{CPO}: 
    \begin{equation}
        \Vert d^{\pi} - d^{\pi_\mathrm{ref}} \Vert_1 \le \frac{2\gamma}{1-\gamma} \mathop{\mathbb{E}}\limits_{s\sim d^{\pi_\mathrm{ref}}} [\mathrm{TV}(\pi,\pi_\mathrm{ref})(s)]
    \end{equation}
    where $\mathrm{TV}(\pi,\pi_\mathrm{ref})(s)$ is defined as in Lemma \ref{lemmaLB}.
\end{lemma}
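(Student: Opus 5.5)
The plan is to prove Lemma~\ref{sec:l3} with the matrix (resolvent) identity of Achiam et al.~\cite{CPO}, treating discounted state distributions as vectors over a finite state space $S$. For a policy $\pi$, let $P_\pi(s'|s)=\sum_{a}P(s'|s,a)\pi(a|s)$ be the state transition matrix, with columns indexed by the current state so that it acts on distributions by left multiplication. Summing the geometric series in the definition of $d_{\pi,\mu}$ gives the closed form
\begin{equation*}
d^{\pi}=(1-\gamma)\sum_{t\ge 0}\gamma^t P_\pi^t\mu=(1-\gamma)G_\pi\mu,\qquad G_\pi=(I-\gamma P_\pi)^{-1}.
\end{equation*}
The same formula with $P_{\pi_\mathrm{ref}}$ gives $d^{\pi_\mathrm{ref}}=(1-\gamma)G_{\pi_\mathrm{ref}}\mu$, using the common starting distribution $\mu$ assumed in the lemma.

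\textbf{Step 1 (resolvent identity).} Set $\Delta=P_\pi-P_{\pi_\mathrm{ref}}$. Since $G_\pi^{-1}-G_{\pi_\mathrm{ref}}^{-1}=-\gamma\Delta$, we get
\begin{equation*}
G_\pi-G_{\pi_\mathrm{ref}}=\gamma G_\pi\Delta G_{\pi_\mathrm{ref}}.
\end{equation*}
Multiplying by $(1-\gamma)\mu$ on the right then yields
\begin{equation*}
d^{\pi}-d^{\pi_\mathrm{ref}}=\gamma G_\pi\Delta d^{\pi_\mathrm{ref}}.
\end{equation*}

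\textbf{Step 2 (operator norm of $G_\pi$).} Each $P_\pi^t$ is column-stochastic, so its induced $\ell_1$ norm equals $1$. Bounding the Neumann series term by term gives $\Vert G_\pi\Vert_1\le\sum_t\gamma^t=1/(1-\gamma)$. Hence
\begin{equation*}
\Vert d^\pi-d^{\pi_\mathrm{ref}}\Vert_1\le\frac{\gamma}{1-\gamma}\Vert\Delta d^{\pi_\mathrm{ref}}\Vert_1 .
\end{equation*}

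\textbf{Step 3 (bound $\Vert\Delta d^{\pi_\mathrm{ref}}\Vert_1$).} Writing out the entries,
\begin{equation*}
\Vert\Delta d^{\pi_\mathrm{ref}}\Vert_1=\sum_{s'}\Big|\sum_s\sum_a P(s'|s,a)\big(\pi(a|s)-\pi_\mathrm{ref}(a|s)\big)d^{\pi_\mathrm{ref}}(s)\Big| .
\end{equation*}
Moving the absolute value inside by the triangle inequality and then summing over $s'$ using $\sum_{s'}P(s'|s,a)=1$ gives
\begin{equation*}
\Vert\Delta d^{\pi_\mathrm{ref}}\Vert_1\le\sum_s d^{\pi_\mathrm{ref}}(s)\sum_a|\pi(a|s)-\pi_\mathrm{ref}(a|s)|=2\,\mathbb{E}_{s\sim d^{\pi_\mathrm{ref}}}\big[\mathrm{TV}(\pi,\pi_\mathrm{ref})(s)\big].
\end{equation*}
The last equality uses $\mathrm{TV}=\tfrac12\Vert\cdot\Vert_1$. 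Substituting this into Step 2 gives exactly the claimed bound $\frac{2\gamma}{1-\gamma}\mathbb{E}_{s\sim d^{\pi_\mathrm{ref}}}[\mathrm{TV}(\pi,\pi_\mathrm{ref})(s)]$.

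The main point needing care is Step 1 together with the choice of where the policy difference sits. The identity must be arranged as $G_\pi\Delta G_{\pi_\mathrm{ref}}$, not $G_{\pi_\mathrm{ref}}\Delta G_\pi$, so that $\Delta$ acts on $d^{\pi_\mathrm{ref}}$; this is what makes the expectation be taken under the reference distribution, as the statement requires. The remaining conventions only need to be consistent: column versus row stochasticity, and the normalization $\mathrm{TV}=\tfrac12\Vert\cdot\Vert_1$. For infinite or continuous $S$, I would repeat the argument with Markov kernels and the total-variation norm on measures, since the Neumann series and the triangle inequality carry over unchanged.
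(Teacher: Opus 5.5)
Your proof is correct and takes essentially the same approach as the paper. The paper gives no argument of its own for this lemma and simply cites Achiam et al.\ \cite{CPO}, whose proof is exactly yours: the resolvent identity $d^{\pi}-d^{\pi_\mathrm{ref}}=\gamma G_\pi\Delta d^{\pi_\mathrm{ref}}$, the bound $\Vert G_\pi\Vert_1\le 1/(1-\gamma)$, and the triangle-inequality estimate $\Vert\Delta d^{\pi_\mathrm{ref}}\Vert_1\le 2\,\mathbb{E}_{s\sim d^{\pi_\mathrm{ref}}}[\mathrm{TV}(\pi,\pi_\mathrm{ref})(s)]$.
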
 

\begin{proof}
Starting from Lemma \ref{sec:l1}, we add and subtract the term 
$$\frac{1}{1-\gamma} \mathop{\mathbb{E}}\limits_{(s,a)\sim\pi_\mathrm{ref}}\big[r_\mathrm{ref}A^{\pi_\mathrm{ref}}(s,a)\big].$$

Then we have:
\begin{equation}\begin{aligned}
    J(\pi) &-J(\pi_\mathrm{ref}) = \frac{1}{1-\gamma} \mathop{\mathbb{E}}\limits_{(s,a)\sim\pi_\mathrm{ref}}\big[r_\mathrm{ref}A^{\pi_\mathrm{ref}}(s,a)\big] \\
    &+\frac{1}{1-\gamma}\left[ \mathop{\mathbb{E}}\limits_{(s,a)\sim\pi}\big[A^{\pi_\mathrm{ref}}(s,a)\big]-\mathop{\mathbb{E}}\limits_{(s,a)\sim\pi_\mathrm{ref}}\big[r_\mathrm{ref}A^{\pi_\mathrm{ref}}(s,a)\big]\right] \\ 
    &\ge\frac{1}{1-\gamma} \mathop{\mathbb{E}}\limits_{(s,a)\sim\pi_\mathrm{ref}}\big[r_\mathrm{ref}A^{\pi_\mathrm{ref}}(s,a)\big]\\
    &-\frac{1}{1-\gamma}\Big| \mathop{\mathbb{E}}\limits_{(s,a)\sim\pi}\big[A^{\pi_\mathrm{ref}}(s,a)\big]-\mathop{\mathbb{E}}\limits_{(s,a)\sim\pi_\mathrm{ref}}\big[r_\mathrm{ref}A^{\pi_\mathrm{ref}}(s,a)\big]\Big|.
\end{aligned}
\end{equation}

We can restrain the second term using Hölder's inequality:
\begin{equation}\begin{aligned}
    \Big|\mathop{\mathbb{E}}\limits_{s\sim d^{\pi}}\big[\mathop{\mathbb{E}}\limits_{a\sim\pi}[A^{\pi_\mathrm{ref}}(s,a)] & \big]-\mathop{\mathbb{E}}\limits_{s\sim d^{\pi_\mathrm{ref}}}\big[\mathop{\mathbb{E}}\limits_{a\sim\pi}[A^{\pi_\mathrm{ref}}(s,a)]\big]\Big|\\
    &\le \Vert d^{\pi}-d^{\pi_\mathrm{ref}}\Vert_1\big\Vert\mathop{\mathbb{E}}\limits_{a\sim\pi}[A^{\pi_\mathrm{ref}}(s,a)]\big\Vert_\infty
\end{aligned}
\end{equation}

Also note that: 
\begin{equation}\begin{aligned}
    \big\Vert\mathop{\mathbb{E}}\limits_{a\sim\pi}[A^{\pi_\mathrm{ref}}(s,a)]\big\Vert_\infty&=\max_{s\in S}\bigl|\mathop{\mathbb{E}}\limits_{a\sim\pi(\cdot|s)}[A^{\pi_\mathrm{ref}}(s,a)]\bigr|\\
    &=C^{\pi,\pi_\mathrm{ref}}.
\end{aligned}
\end{equation}
    
Thus, we can get Lemma \ref{lemma:SB}, the prerequisite of the theorem \ref{Theo1}, policy improvement lower bound between any future policy $\pi$ and the reference policy $\pi_\mathrm{ref}$: 

\begin{equation}
    \begin{aligned}
        J(\pi)-J(\pi_{\mathrm{ref}}) \ge & \frac{1}{1-\gamma} \mathop{\mathbb{E}}\limits_{(s,a)\sim \pi_{\mathrm{ref}}} \left[ \frac{\pi(a|s)}{\pi_{\mathrm{ref}}(a|s)} A^{\pi_{\mathrm{ref}}}(s,a) \right]\\
        &- \frac{2\gamma C^{\pi,\pi_{\mathrm{ref}}}}{(1-\gamma)^2}\mathop{\mathbb{E}}\limits_{(s,a)\sim \pi_{\mathrm{ref}}}\Big[\mathrm{TV}(\pi,\pi_{\mathrm{ref}})(s)\Big].
    \end{aligned}
\end{equation}
\end{proof}

\subsection{Proof of Theorem \ref{Theo1}} \label{Approoftheo}
\begin{proof}
    As Lemma \ref{lemma:SB} represents an inequality relation between training policy $\pi$ and reference policies $\pi_{t-i}$, $i=0,1,...,M-1$ in the replay buffer, we apply Lemma \ref{lemma:SB} $M$ times in terms of convex combination of sampling probability distribution $\nu=[\nu_0 ,..., \nu_{M-1}]$. 
    \begin{equation}\begin{aligned}
    \sum_{i=0}^{M-1}\nu_i[J(\pi)- J(\pi_{t-i})] \ge & \sum_{i=0}^{M-1}\nu_i \Bigg[\frac{1}{1-\gamma}\Big[ \mathop{\mathbb{E}}\limits_{(s,a)\sim \pi_{t-i}} \big[ r_{t-i}(s,a) A^{\pi_{t-i}}(s,a) \big] \Big]  \\
    & - \Big[\frac{2\gamma C^{\pi,\pi_{t-i}}}{(1-\gamma)^2} \mathop{\mathbb{E}}\limits_{(s,a)\sim \pi_{t-i}}[\mathrm{TV}(\pi,\pi_{t-i})(s) ]\Big]\Bigg]
\end{aligned}\end{equation}

    Taking the expectation with respect to the distribution $\nu$ on both sides of the inequality yields Theorem \ref{Theo1}.
    \begin{equation}\begin{aligned}
    J(\pi)- &\mathop{\mathbb{E}}\limits_{i\sim\nu} [J(\pi_{t-i})] \ge \frac{1}{1-\gamma}\mathop{\mathbb{E}}\limits_{i\sim\nu}\Big[ \mathop{\mathbb{E}}\limits_{(s,a)\sim \pi_{t-i}} \big[ r_{t-i}(s,a) A^{\pi_{t-i}}(s,a) \big] \Big]  \\
    & - \mathop{\mathbb{E}}\limits_{i\sim\nu}\left[\frac{2\gamma C^{\pi,\pi_{t-i}}}{(1-\gamma)^2} \mathop{\mathbb{E}}\limits_{(s,a)\sim \pi_{t-i}}\big[\mathrm{TV}(\pi,\pi_{t-i})(s) \big]\right]
\end{aligned}\end{equation}
    
\end{proof}

\section{Implementation Details} 
We describe more implementation details in this section to facilitate reproduction of our experimental results.

\subsection{Network structure and hyperparameters}
For the experiments on Atari games, the target policy $\pi$ is a Gibbs distribution parameterized by a CNN network with softmax activation. The value approximator $V^{\pi}$ shares a similar network structure with different domain of output. Their base structure and hyperparameters is identical to that of Tianshou \cite{tianshou}, an open-source reinforcement learning library, comprising three convolutional layers followed by two fully connected layers. The initial learning rate is set to $2.5 \times 10^{-4}$, and it would be decayed by a factor of $0.99$ every $5$K training steps. As mentioned above, we set the default parallel environment number to $8$, the episodic run steps per environment to $256$, and the training batch size to 256 for the on-policy algorithms. For mixed off-policy algorithms, we select the number of prior policies $M=4$ and parallel environment number to $2$, ensuring that each sample is calculated the same times as on-policy algorithms. The default clipping parameter is $\varepsilon=0.2$, weight $\beta=1$ for KL divergence, and the edge decaying factor $\alpha$ in ExO-PPO is $5$. As for other similar algorithms, we generally followed the parameter settings in their corresponding papers, including the temperature $\tau = 2$ for P3O-Scopic \cite{TheSuffi}, and the stopping threshold $\delta = 0.25$ for ESPPO \cite{ESPPO}. In addition, we use $\lambda=0.95$ of GAE for advantage estimation, controlling the weighted expectation of long-term return. 

As for the MuJoCo tasks, the target policy $\pi$ is a Gaussian policy, which is a parameterized multivariate Gaussian distribution. The mean is approximated by a dense network, while the standard deviation a separate parameter. For each action, the standard deviation is initialized as a multiple of half of the feasible action range. Observations are standardized using a running mean and standard deviation throughout training. And the learning rate is set to $1.5 \times 10^{-4}$, and it would be decayed by a factor of $0.98$ every $1$M training steps. In terms of TD3 and SAC algorithms, their learning rate is $1 \times 10^{-3}$, and the capacity of replay buffer is $1 \times 10^{6}$. Note that, since we do not have access to the precise action distribution of reference policies in offline training, the mean of these policies is derived from the offline data, while their standard deviation decreases from an initial value of $\frac{1}{\sqrt{2\pi}}$. This initial standard deviation value yields a relatively high probability density at the mean, while maintaining variability in the distribution for continuous actions. In continuous tasks, most of the hyperparameters of ExO-PPO keep the same as the discrete tasks, while the weight for KL divergence $\beta$ changes to $0.1$. 

\subsection{Computational resources}
All experiments were run on an Ubuntu 22.04 operating system, Intel Core i9-10980XE CPU and 128 GB of RAM, and NVIDIA RTX A6000 GPU with 48 GB of dedicated memory. We utilized the open-source platform Gymnasium \cite{gymnasium}, formerly known as OpenAI Gym \cite{gym}, as the simulation environments employed for online training and evaluation. And we leveraged the 'mujoco-expert-v0' dataset from Minari \cite{minari} as our offline training data. The implementations of baseline algorithms, including PPO, TD3, and SAC, were based on the open-source library Tianshou \cite{tianshou}.  

\subsection{Code}
Our cods will be available at https://github.com/HarriWxy/ExO-PPO if accepted.




\end{appendices}


\bibliography{exo}

\end{document}